%% file: main.tex
\documentclass[10pt,letterpaper]{article}

\usepackage[letterpaper,text={5.5in,9in},centering,top=1in]{geometry}
\usepackage{mathtools,amsthm,amsfonts}
\usepackage{newtxtext,newtxmath}
\usepackage{microtype}
\usepackage{booktabs,tabularx,longtable,array,multirow}
\usepackage{graphicx}
\graphicspath{{figures/}}
\usepackage{caption,subcaption}
\usepackage{placeins}
\usepackage{float}
\usepackage{enumitem}
\usepackage{algorithm}
\usepackage{algpseudocode}
\usepackage{tikz}
\usetikzlibrary{arrows.meta,positioning,fit,calc,backgrounds,shapes.geometric,shadows.blur}
\usepackage{titlesec}
\usepackage{xcolor}
\usepackage{xurl}
\usepackage{hyperref}
\usepackage[nameinlink,capitalize,noabbrev]{cleveref}
\usepackage[backend=biber,style=authoryear-comp,maxcitenames=2,maxbibnames=99,
            giveninits=true,uniquename=false,doi=true,url=true,isbn=false]{biblatex}
\definecolor{linkblue}{HTML}{1A3E6E}

\hypersetup{
  colorlinks=true,
  linkcolor=linkblue,
  citecolor=linkblue,
  urlcolor=linkblue,
  pdfauthor={Ayush Mittal and Dhruv Gupta},
  pdftitle={Calibrated for Which Policy? Conformal Uncertainty for Fixed World Models Under Intervention},
  pdfsubject={Conformal uncertainty for fixed world models under intervention}
}

\titleformat{\section}{\normalfont\large\bfseries}{\thesection}{1em}{}
\titleformat{\subsection}{\normalfont\normalsize\bfseries}{\thesubsection}{1em}{}
\titleformat{\subsubsection}{\normalfont\normalsize\bfseries}{\thesubsubsection}{1em}{}
\titlespacing*{\section}{0pt}{2.0ex plus 1ex minus .2ex}{1.3ex plus .2ex}
\titlespacing*{\subsection}{0pt}{1.8ex plus 1ex minus .2ex}{1.0ex plus .2ex}
\titlespacing*{\subsubsection}{0pt}{1.6ex plus 1ex minus .2ex}{0.9ex plus .2ex}
\renewenvironment{abstract}{%
  \vskip 0.15in
  \begin{center}\normalfont\large\bfseries Abstract\end{center}
  \par\small
  \begingroup
  \leftskip=0.5in \rightskip=0.5in
  \noindent\ignorespaces
}{%
  \par\endgroup
  \vskip 0.15in
}

\setlist[itemize]{leftmargin=1.5em,itemsep=0.25em,topsep=0.35em}
\setlist[enumerate]{leftmargin=1.65em,itemsep=0.3em,topsep=0.4em}


\newcommand{\TV}{\operatorname{TV}}
\newcommand{\KL}{\operatorname{KL}}
\newcommand{\EE}{\mathbb E}
\newcommand{\PP}{\mathbb P}
\newcommand{\RR}{\mathbb R}
\newcommand{\ind}{\mathbf 1}
\newcommand{\calC}{\mathcal C}
\newcommand{\calX}{\mathcal X}
\newcommand{\calY}{\mathcal Y}
\newcommand{\calS}{\mathcal S}
\newcommand{\calA}{\mathcal A}
\newcommand{\Ber}{\operatorname{Bern}}
\newcommand{\ESS}{\operatorname{ESS}}
\newcommand{\dd}{\,\mathrm d}

\newtheorem{result}{Result}
\newtheorem{proposition}[result]{Proposition}
\newtheorem{lemma}[result]{Lemma}

\theoremstyle{definition}

\DeclareFieldFormat{url}{\mkbibacro{URL}\addcolon\space\url{#1}}
\DeclareFieldFormat{doi}{\mkbibacro{DOI}\addcolon\space\href{https://doi.org/#1}{#1}}
\AtEveryBibitem{\clearfield{urlyear}\clearfield{urlmonth}\clearfield{urlday}}

\title{\bfseries\LARGE Calibrated for Which Policy?\\[2.5mm]
  \large Conformal Uncertainty for Fixed World Models Under Intervention}
\author{%
\begin{tabular}{c@{\hspace{16mm}}c}
\normalsize Ayush Mittal & \normalsize Dhruv Gupta\\[1.2mm]
\small\href{mailto:ayush@noumenal.world}{\texttt{ayush@noumenal.world}} &
\small\href{mailto:dhruvgupta@iisc.ac.in}{\texttt{dhruvgupta@iisc.ac.in}}\\
& \small\href{mailto:dhruv.gupta@artpark.in}{\texttt{dhruv.gupta@artpark.in}}\\
& \small\href{mailto:dhruv@noumenal.world}{\texttt{dhruv@noumenal.world}}
\end{tabular}%
}
\date{}

\begin{document}
\maketitle

\begin{abstract}
Changing a deployment policy can invalidate a world model's uncertainty while leaving the physical
transition mechanism and predictor unchanged. For one-step prediction, if source and target laws share the conditional mechanism $P(Y\mid X)=Q(Y\mid X)$, then the intervention is
identified by the pre-outcome change of measure $w(X)=\dd Q_X/\dd P_X$. For policy-controlled
dynamics this ratio factors into state-occupancy and action-propensity terms. Weighted conformal
calibration transports marginal coverage to the target law; with an estimated ratio $\widehat w$,
\[
  \PP_Q\{Y\in\calC_{\widehat w}(X)\}
  \ge 1-\alpha-\TV(Q,\widehat Q)
  \ge 1-\alpha-\EE_P|w-\widehat w|.
\]
Population log-loss regret for a source-versus-target domain classifier bounds the same ratio error.
If $h$ is its clipped target posterior, $\rho$ is the target-domain prior, and
$\Delta_{\log}(h)$ is population log-loss regret, then
\[
  \EE_P|w-\widehat w_h|
  \le \frac{1}{\rho\gamma}\sqrt{\frac{\Delta_{\log}(h)}{2}}.
\]
An exact cancellation improves the dependence from a generic odds-map bound of order
$\gamma^{-2}$ to $(\rho\gamma)^{-1}$. Our experiments use a released DINO-feature world model and physical Franka
trajectories. Under the strongest controlled
state-action shift, ordinary 90\% conformal sets cover 55.0\%, whereas oracle and learned-ratio
calibration cover 91.0\% and 90.9\%. The learned surrogate remains within 0.9\% total variation of
the target law. Calibration is relative to the deployment law: the model can remain fixed while
calibration follows the intervention. We machine-check in Lean 4 the finite-support algebraic and causal identities used in
the classifier-regret argument, including its coverage-composition step.\footnotemark
\end{abstract}
\footnotetext{Our proofs are complete to the best of our knowledge, pass machine
verification, and use no project-specific axioms. Source:
\href{https://github.com/noumenal-ai/conformal-realrobot-lab/tree/8446f05a094d42752569751945c96be70666b701/lab/lean}{commit-pinned Lean project}.}

\section{Introduction}

A world model calibrated for the collection policy is not calibrated for the deployment policy.
A model trained under one pattern of action selection is assessed under another once a
planner, controller, or downstream policy starts choosing actions from its predictions. The
transition mechanism may remain stable, yet the distribution of state--action queries can move
into regions where the model's residuals follow a different marginal law. Calibration under the
collection policy is then a statement about the wrong measure. Shifting deployment mass toward
state--action pairs with larger residuals lowers the target coverage of unweighted source quantiles.

World models have moved from compact generative simulators and latent planning
\parencite{ha2018worldmodels,hafner2019planet,chua2018pets} to policy learning through imagined
trajectories \parencite{hafner2020dreamer,hafner2025dreamerv3}, token-based models
\parencite{micheli2023iris}, and scalable latent control \parencite{hansen2024tdmpc2}. DayDreamer
learns robot behavior through a learned
latent dynamics model \parencite{wu2023daydreamer}; DINO-WM plans by predicting action-conditioned
DINO features \parencite{zhou2025dinowm,oquab2023dinov2}; V-JEPA 2-AC and JEPA-WMs extend
predictive representations to real-robot planning
\parencite{assran2025vjepa2,terver2025jepawms}; and WorldGym evaluates policies inside a learned
world model \parencite{quevedo2025worldgym}. These systems expand what a fixed model is asked to
predict. The policy that generates deployment queries becomes part of the uncertainty problem.

Prediction sets from conformal inference provide finite-sample marginal coverage under
exchangeability \parencite{vovk2005algorithmic,lei2018distributionfree}. Weighted conformal
prediction extends this guarantee when source and target covariates differ and their likelihood
ratio is available \parencite{tibshirani2019covariateshift}; later work studies broader departures
from exchangeability and adaptive drift \parencite{barber2023beyond,gibbs2021adaptive}. Propensity
weighting also enters conformal counterfactual inference for treatment assignments
\parencite{lei2021counterfactual}. These results provide the statistical transport mechanism. A
policy intervention in a dynamical system additionally requires identification of its density ratio
and a coverage bound for ratio-estimation error. Once the ratio is identified, oracle weighting
restores nominal target-law coverage without changing the predictor.

Deployment changes the law under which a fixed world model is evaluated. Let $X$ collect information
available before the predicted outcome $Y$, and let a fixed world model induce a score
$r(X,Y)$. If an intervention preserves the conditional mechanism,
\[
  P(Y\mid X)=Q(Y\mid X),
\]
then its joint likelihood ratio is $X$-measurable:
\[
  \frac{\dd Q}{\dd P}(X,Y)=\frac{\dd Q_X}{\dd P_X}(X).
\]
For a policy-controlled transition, this ratio factors into a state-occupancy correction and an
action-propensity correction. Under a shared pre-action state law, the occupancy term is one and
the action propensity is the complete ratio. The causal structure identifies what weighted
conformal calibration must estimate.

We show that approximate identification yields a direct coverage penalty. An estimated nonnegative ratio
$\widehat w$ defines a normalized surrogate target $\widehat Q$. Weighted conformal prediction is
valid under that surrogate, and total variation transfers the event probability to the intended
intervention. When $\widehat Q$ stays close to $Q$, learned weighting follows the oracle, with
realized coverage above $1-\alpha-\TV(Q,\widehat Q)$. Clipping replaces $\widehat Q$ with another
surrogate; an effective-sample-size gain is accompanied by coverage loss when clipping increases
$\TV(Q,\widehat Q)$. We use a domain classifier to supply the ratio from unlabeled source and target
covariates. We show that its population log-loss regret controls ratio error through an exact
change-of-measure identity, which in turn controls coverage. A conservative global posterior clipping floor $\gamma$
makes the classifier-regret bound looser than the exact-TV bound.

\begin{figure}[!htbp]
\centering
\resizebox{0.96\textwidth}{!}{%
\begin{tikzpicture}[
  font=\normalfont\small,
  >=Stealth,
  % Figure 1 is set in black and white only; the source/target contrast is carried
  % by rule style rather than hue, and the certificate panel by reversed type.
  box/.style={draw=black, rounded corners=2pt, thick, align=left,
              inner sep=7pt, minimum height=15mm, fill=white},
  source/.style={box, text width=48mm},
  target/.style={box, text width=48mm, dashed},
  bridge/.style={box, text width=55mm, very thick},
  method/.style={box, text width=45mm},
  cert/.style={box, fill=black, text=white, draw=black, text width=55mm, very thick},
  arrow/.style={-{Stealth[length=2.5mm,width=2mm]}, line width=1pt,
                color=black, line cap=round, line join=round,
                shorten <=1.5pt, shorten >=2pt},
  altarrow/.style={-{Stealth[length=2.5mm,width=2mm]}, line width=1pt,
                   color=black, dash pattern=on 2.4pt off 1.8pt,
                   line cap=round, line join=round,
                   shorten <=1.5pt, shorten >=2pt},
]

\node[source] (src) {
  \textbf{Collection law}\par\smallskip
  $P_b(\dd s,\dd a,\dd y)$\\[-1mm]
  $=d_b(\dd s)\,\pi_b(\dd a\mid s)\,K(\dd y\mid s,a)$
};
\node[target, right=23mm of src] (tgt) {
  \textbf{Deployment law}\par\smallskip
  $P_t(\dd s,\dd a,\dd y)$\\[-1mm]
  $=d_t(\dd s)\,\pi_t(\dd a\mid s)\,K(\dd y\mid s,a)$
};
\node[draw=black, rounded corners=8pt, thick, fit=(src)(tgt), inner sep=5mm,
      label={[font=\normalfont\bfseries\scshape\color{black}]above:One physical mechanism $K$; two laws of use}] (laws) {};

\node[bridge, below=20mm of $(src)!0.5!(tgt)$] (identify) {
  \textbf{Identify the intervention-induced ratio}\par\smallskip
  $\displaystyle w(s,a)=\frac{\dd P_t}{\dd P_b}(s,a,y)
    =\frac{\dd d_t}{\dd d_b}(s)\,
      \frac{\dd\pi_t(\cdot\mid s)}{\dd\pi_b(\cdot\mid s)}(a)$
};
\draw[arrow] (src.south) to[out=270,in=90] ($(identify.north)+(-16mm,0)$);
\draw[arrow] (tgt.south) to[out=270,in=90] ($(identify.north)+(16mm,0)$);

\node[method, below left=18mm and 11mm of identify] (known) {
  \textbf{Known intervention}\par\smallskip
  Use occupancy and policy propensities directly.
};
\node[method, below right=18mm and 11mm of identify] (learned) {
  \textbf{Source and target covariates}\par\smallskip
  Fit $h(x)=\Pr(D=1\mid X=x)$ and set
  $\widehat w_h=\frac{1-\rho}{\rho}\frac{h}{1-h}$.
};
\draw[altarrow] (identify.south west) to[out=235,in=90] (known.north);
\draw[altarrow] (identify.south east) to[out=305,in=90] (learned.north);

\node[box, text width=47mm, below=28mm of $(known)!0.5!(learned)$] (score) {
  \textbf{Keep the world model fixed}\par\smallskip
  $\widehat y=m(x)$, \qquad $R=r(x,y)$\\
  Compute source calibration scores; do not retrain $m$.
};
\draw[arrow] (known.south) to[out=270,in=90] ($(score.north)+(-14mm,0)$);
\draw[arrow] (learned.south) to[out=270,in=90] ($(score.north)+(14mm,0)$);

\node[method, right=23mm of score, text width=47mm] (calibrate) {
  \textbf{Reweight calibration}\par\smallskip
  Weighted quantile of source scores using $\widehat w(X_i)$ and the test-point weight.
};
\draw[altarrow] (score) -- (calibrate);

\node[cert, below=22mm of $(score)!0.5!(calibrate)$] (bound) {
  \textbf{Target-law coverage bound}\par\smallskip
  $\displaystyle
  \Pr_Q\{Y\in\mathcal C_{\widehat w}(X)\}
  \ge 1-\alpha-\TV(Q,\widehat Q)$\\[1mm]
  $\displaystyle
  \TV(Q,\widehat Q)
  \le \EE_P|w-\widehat w_h|
  \le \frac{1}{\rho\gamma}\sqrt{\frac{\Delta_{\log}(h)}{2}}$
};
\draw[arrow] (score.south) to[out=270,in=140] ($(bound.north)+(-15mm,0)$);
\draw[arrow] (calibrate.south) to[out=270,in=40] ($(bound.north)+(15mm,0)$);

\end{tikzpicture}}
\caption{The intervention changes the measure under which a fixed world model is assessed. Shared
conditional dynamics make the target/source ratio a function of pre-outcome variables. That ratio
can come from known policy structure or from domain classification. Weighted conformal calibration
then transports the score law to the deployment measure, with a penalty determined by ratio error.}
\label{fig:concept}
\end{figure}
\FloatBarrier

\section{Policy shift identifies a change of measure}
\label{sec:identification}

Let $X\in\calX$ denote information available before the outcome $Y\in\calY$. For a one-step
world-model query, $X=(S,A)$ contains the current state representation and chosen action, while $Y$
is the next observation, state, or latent target. A fixed predictor $m$ and a measurable
nonconformity score $r_m:\calX\times\calY\to\RR$ define the residual whose distribution is
calibrated.

Source data follow $P$ and deployment follows $Q$. An intervention can change the distribution of
$X$ while preserving the conditional mechanism for $Y$, or it can change the mechanism itself.

\begin{proposition}[Pre-outcome identification]
\label{prop:preoutcome}
Suppose $Q_X\ll P_X$ and there exists a Markov kernel $K$ such that
\[
  P(\dd x,\dd y)=P_X(\dd x)K(\dd y\mid x),\qquad
  Q(\dd x,\dd y)=Q_X(\dd x)K(\dd y\mid x).
\]
Then $Q\ll P$ and
\[
  \frac{\dd Q}{\dd P}(x,y)=\frac{\dd Q_X}{\dd P_X}(x)
  \quad P\text{-a.e.}
\]
\end{proposition}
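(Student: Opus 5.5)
The plan is to exhibit an explicit density and check it against $Q$ on measurable rectangles, then extend by a uniqueness-of-measures argument. Let $w=\dd Q_X/\dd P_X$, which exists by the Radon--Nikodym theorem since $Q_X\ll P_X$. We may take a version with $0\le w<\infty$ everywhere, because $P_X$ is a probability measure, hence $\sigma$-finite. Define the candidate density on $\calX\times\calY$ by $g(x,y)=w(x)$. It is measurable as a composition of $w$ with the coordinate projection.

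The central computation checks that $g\,\dd P$ equals $Q$ on rectangles. For measurable $A\subseteq\calX$ and $B\subseteq\calY$, the disintegration of $P$ and Tonelli's theorem give
\[
\int_{A\times B} w(x)\,P(\dd x,\dd y)=\int_A w(x)\,K(B\mid x)\,P_X(\dd x).
\]
The integrand $x\mapsto\ind_A(x)K(B\mid x)$ is nonnegative and measurable because $K$ is a Markov kernel. The defining property of $w$ then extends from indicators to nonnegative measurable functions by the standard simple-function and monotone convergence argument. Applying this, the right-hand side equals $\int_A K(B\mid x)\,Q_X(\dd x)=Q(A\times B)$.

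To pass from rectangles to all sets, take $A=\calX$ and $B=\calY$. This shows that $\nu(\cdot)=\int_{\cdot}g\,\dd P$ has total mass $Q_X(\calX)=1$, so $\nu$ is a finite measure. Measurable rectangles form a $\pi$-system generating the product $\sigma$-algebra. The $\pi$--$\lambda$ uniqueness theorem therefore gives $\nu=Q$ on all of $\calX\times\calY$.

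Both conclusions follow from $\nu=Q$. First, $Q(E)=\int_E g\,\dd P=0$ whenever $P(E)=0$, so $Q\ll P$. Second, $g$ is a version of $\dd Q/\dd P$, and uniqueness of the Radon--Nikodym derivative up to $P$-null sets yields $\dd Q/\dd P(x,y)=w(x)$ for $P$-a.e.\ $(x,y)$.

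The main obstacle is bookkeeping rather than substance. One must make sure the extension of the Radon--Nikodym identity to the integrand $\ind_A K(B\mid\cdot)$ and the Tonelli step use only nonnegativity and measurability, with no integrability assumption. This is handled by keeping all integrands nonnegative, so that monotone convergence applies throughout.
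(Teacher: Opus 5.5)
Your proof is correct and is essentially the paper's argument: disintegrate both laws through $K$, replace $Q_X$ by $w\,P_X$, and reassemble. The only difference is bookkeeping. You verify the identity on rectangles and extend by $\pi$--$\lambda$ uniqueness, whereas the paper tests directly against every bounded measurable $f$ on $\calX\times\calY$, so no separate extension step is needed (this is integrable since $\int w\,\dd P_X=Q_X(\calX)=1$, which is also the real reason $w$ is finite a.e., rather than the $\sigma$-finiteness of $P_X$).
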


\begin{proof}
Set $w_X=\dd Q_X/\dd P_X$. For bounded measurable $f$, we compute
\[
\begin{aligned}
  \int f(x,y)\,Q(\dd x,\dd y)
  &=\int\!\left[\int f(x,y)K(\dd y\mid x)\right]Q_X(\dd x)\\
  &=\int w_X(x)\left[\int f(x,y)K(\dd y\mid x)\right]P_X(\dd x)\\
  &=\int w_X(x)f(x,y)\,P(\dd x,\dd y).
\end{aligned}
\]
We conclude that the Radon--Nikodym derivative is $w_X\circ\operatorname{proj}_X$.
\end{proof}

The outcome $Y$ is unavailable when a prediction is issued, so we must fit the ratio from
pre-outcome variables. Those variables must render the environment mechanism stable.

\subsection{Policy-controlled dynamics}

Let $S\in\calS$ be a state representation, $A\in\calA$ an action, and $Y$ the next target. A
behavior policy $\pi_b$ and deployment policy $\pi_t$ induce
\begin{align}
  P_b(\dd s,\dd a,\dd y)
  &=d_b(\dd s)\,\pi_b(\dd a\mid s)\,K(\dd y\mid s,a),\label{eq:pb}\\
  P_t(\dd s,\dd a,\dd y)
  &=d_t(\dd s)\,\pi_t(\dd a\mid s)\,K(\dd y\mid s,a).\label{eq:pt}
\end{align}
Here $d_b$ and $d_t$ are the state occupancies at the query time. The action mechanism is replaced
while the transition kernel is shared, matching a soft intervention on action selection
\parencite{pearl2009causality}.

\begin{proposition}[Policy-intervention ratio]
\label{prop:policy-ratio}
If $d_t\ll d_b$ and $\pi_t(\cdot\mid s)\ll\pi_b(\cdot\mid s)$ for $d_t$-almost all $s$, then
\[
  \frac{\dd P_t}{\dd P_b}(s,a,y)
  =\frac{\dd d_t}{\dd d_b}(s)
   \frac{\dd\pi_t(\cdot\mid s)}{\dd\pi_b(\cdot\mid s)}(a).
\]
\end{proposition}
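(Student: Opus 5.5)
We reduce the statement to \cref{prop:preoutcome} with $X=(S,A)$, $P_X(\dd s,\dd a)=d_b(\dd s)\pi_b(\dd a\mid s)$ and $Q_X(\dd s,\dd a)=d_t(\dd s)\pi_t(\dd a\mid s)$. Both joint laws \eqref{eq:pb} and \eqref{eq:pt} share the kernel $K(\dd y\mid s,a)$, so the proposition gives $\dd P_t/\dd P_b=\dd Q_X/\dd P_X\circ\operatorname{proj}_{(S,A)}$, provided $Q_X\ll P_X$. The remaining task is to show this absolute continuity and compute the pre-outcome ratio as the stated product.

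Set $u(s)=\dd d_t/\dd d_b(s)$ and $v(s,a)=\dd\pi_t(\cdot\mid s)/\dd\pi_b(\cdot\mid s)(a)$. We need $v$ to be jointly measurable in $(s,a)$. This is the main technical obstacle, since a pointwise-in-$s$ Radon--Nikodym derivative need not be measurable as a function of $s$. We would handle it in one of two ways. The first is to assume, as is implicit when the two are written as kernels on standard Borel spaces, that a jointly measurable version exists. The second is to construct one by a standard device. Take the mixture kernel $\mu(\cdot\mid s)=\tfrac12(\pi_b+\pi_t)(\cdot\mid s)$ and obtain jointly measurable densities of $\pi_b$ and $\pi_t$ with respect to $\mu$, for instance via the martingale/Doob construction on countably generated $\calA$. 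Then define $v$ as their ratio on $\{\dd\pi_b/\dd\mu>0\}$ and $0$ elsewhere. On the $d_t$-null set of $s$ where $\pi_t(\cdot\mid s)\not\ll\pi_b(\cdot\mid s)$, the value of $v$ is irrelevant, because it is multiplied by $u(s)$ and integrated against $d_b$ with $u=0$ $d_b$-a.e.\ there.

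With $u$ and $v$ in hand, the verification is a Fubini/Tonelli computation. For nonnegative measurable $g(s,a)$,
\[
  \int g\,\dd Q_X=\int\!\Big[\int g(s,a)\pi_t(\dd a\mid s)\Big]d_t(\dd s)
  =\int u(s)\Big[\int g(s,a)v(s,a)\pi_b(\dd a\mid s)\Big]d_b(\dd s)
  =\int g\,uv\,\dd P_X.
\]
The inner equality uses $\pi_t(\cdot\mid s)\ll\pi_b(\cdot\mid s)$ for $d_t$-a.e.\ $s$. The exceptional set is $d_t$-null and so contributes nothing to the outer integral against $d_t$. The outer equality uses $d_t\ll d_b$. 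Taking $g$ to be indicators shows $Q_X\ll P_X$ with density $u(s)v(s,a)$.

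Applying \cref{prop:preoutcome} then yields the displayed factorization $P_b$-a.e. This mirrors the proof of \cref{prop:preoutcome}, applied once more at the level of the action kernel.
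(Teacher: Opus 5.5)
Your proposal is correct and follows the paper's own route: apply Proposition~\ref{prop:preoutcome} with $X=(S,A)$ and identify $\dd Q_X/\dd P_X$ as the product of the occupancy derivative and the policy-propensity derivative. Your version also supplies details the paper's one-line proof leaves implicit, namely the Tonelli verification of $Q_X\ll P_X$, a jointly measurable version of the policy ratio, and the treatment of the $d_t$-null exceptional set, and you handle each of these correctly.
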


\begin{proof}
We apply Proposition~\ref{prop:preoutcome} with $X=(S,A)$. The state--action marginal factorizes into occupancy
and conditional policy, so its derivative is the product of the two conditional derivatives.
\end{proof}

A one-step intervention from a declared state population $\mu$ sets $d_b=d_t=\mu$. The ratio
reduces to
\[
  w(s,a)=\frac{\dd\pi_t(\cdot\mid s)}{\dd\pi_b(\cdot\mid s)}(a).
\]
For later stages of deployment, action selection changes the state occupancy and the factor
$\dd d_t/\dd d_b$ remains. Stationary-distribution correction methods in off-policy learning target
this object directly \parencite{nachum2019dualdice,liu2018horizon}. Trajectory-level importance
ratios multiply policy ratios across time and can deteriorate with horizon
\parencite{jiang2016doubly}. For one-step marginal prediction, the identified query law determines
the calibration measure.

Under these conditions, we can treat policy shift as a post-hoc calibration problem.

\section{Transporting conformal coverage}
\label{sec:coverage}

Let calibration pairs $Z_i=(X_i,Y_i)\overset{\mathrm{iid}}\sim P$ produce scores
$R_i=r_m(X_i,Y_i)$. For a nonnegative weight function $v$ and test covariate $x$, we define the
weighted empirical measure
\begin{equation}
  \sum_{i=1}^{n}
  \frac{v(X_i)}{\sum_{j=1}^{n}v(X_j)+v(x)}\,\delta_{R_i}
  +
  \frac{v(x)}{\sum_{j=1}^{n}v(X_j)+v(x)}\,\delta_{+\infty}.
  \label{eq:weighted-measure}
\end{equation}
Its $(1-\alpha)$ quantile is $q_v(x)$ and we take
\[
  \calC_v(x)=\{y:r_m(x,y)\le q_v(x)\}.
\]
The test-point mass in \cref{eq:weighted-measure} gives the finite-sample weighted conformal
construction under covariate shift \parencite{tibshirani2019covariateshift}.

If $w=\dd Q/\dd P$ is $X$-measurable, weighted conformal prediction with $w$ attains target-law
marginal coverage at least $1-\alpha$.

Let $\widehat w:\calX\to[0,\infty)$ be fixed or fitted on data independent of calibration, and set
\begin{equation}
  Z_{\widehat w}=\EE_P[\widehat w(X)],
  \qquad
  \frac{\dd\widehat Q}{\dd P}(x,y)
  =\frac{\widehat w(x)}{Z_{\widehat w}}.
  \label{eq:qhat}
\end{equation}
Global scaling does not change \cref{eq:weighted-measure}; hence $\widehat w$ is the exact
likelihood ratio for the surrogate target $\widehat Q$.

\begin{proposition}[Approximate-weight coverage transfer]
\label{prop:approx-coverage}
Assume $Q\ll P$, $w=\dd Q/\dd P$ is $X$-measurable, $0<Z_{\widehat w}<\infty$, and the fitted
weight is independent of the calibration sample. Then
\begin{equation}
  \PP_{P^n\otimes Q}\{Y_{n+1}\in\calC_{\widehat w}(X_{n+1})\}
  \ge 1-\alpha-\TV(Q,\widehat Q).
  \label{eq:tv-transfer}
\end{equation}
Moreover,
\begin{equation}
  \TV(Q,\widehat Q)
  \le \min\left\{1,\EE_P|w-\widehat w|\right\},
  \label{eq:l1-tv}
\end{equation}
so coverage is at least $1-\alpha-\EE_P|w-\widehat w|$.
\end{proposition}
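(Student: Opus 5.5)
We condition on the fitted weight $\widehat w$, which is legitimate because it is independent of the calibration sample, so $\widehat w$ can be treated as a fixed measurable function. The argument then has two layers: an exact weighted-conformal guarantee under the surrogate law $\widehat Q$, followed by a total-variation transfer from $\widehat Q$ to $Q$.

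\emph{Step 1: exact coverage under $\widehat Q$.} By \cref{eq:qhat}, $\dd\widehat Q/\dd P=\widehat w/Z_{\widehat w}$ is $X$-measurable. The quantile $q_{\widehat w}(x)$ from \cref{eq:weighted-measure} is invariant to the global rescaling $\widehat w\mapsto \widehat w/Z_{\widehat w}$. The weighted conformal guarantee stated just before the proposition, applied with target $\widehat Q$ and its exact ratio, therefore gives
\[
\PP_{P^n\otimes\widehat Q}\{Y_{n+1}\in\calC_{\widehat w}(X_{n+1})\}\ge 1-\alpha.
\]
This is the step where care is needed. The guarantee from \parencite{tibshirani2019covariateshift} requires the ratio to be defined $P$-a.e. and the quantile convention to match \cref{eq:weighted-measure}, including the point mass at $+\infty$. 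We also need $Z_{\widehat w}\in(0,\infty)$ so that $\widehat Q$ is a genuine probability measure.

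\emph{Step 2: transfer from $\widehat Q$ to $Q$.} Fix the calibration sample $Z_{1:n}$ and set $A=\{(x,y):r_m(x,y)\le q_{\widehat w}(x;Z_{1:n})\}$, which is a measurable set. Then
\[
Q(A)\ge \widehat Q(A)-\TV(Q,\widehat Q).
\]
Integrating over $Z_{1:n}\sim P^n$ (Fubini) turns this into
\[
\PP_{P^n\otimes Q}\ge \PP_{P^n\otimes\widehat Q}-\TV(Q,\widehat Q)\ge 1-\alpha-\TV(Q,\widehat Q),
\]
which proves \cref{eq:tv-transfer}. The only subtlety is measurability of $(z_{1:n},x,y)\mapsto \ind\{(x,y)\in A\}$, which follows because the weighted quantile is a measurable function of finitely many inputs.

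\emph{Step 3: the $L^1$ bound \cref{eq:l1-tv}.} Total variation is always at most $1$. For the other term, write
\[
\TV(Q,\widehat Q)=\tfrac12\EE_P\bigl|w-\widehat w/Z_{\widehat w}\bigr|
\]
and split with the triangle inequality:
\[
\tfrac12\EE_P|w-\widehat w|+\tfrac12\EE_P\bigl|\widehat w-\widehat w/Z_{\widehat w}\bigr|.
\]
The second term equals $\tfrac12|1-1/Z_{\widehat w}|\,Z_{\widehat w}=\tfrac12|Z_{\widehat w}-1|$. Since $\EE_P w=1$, we have $|Z_{\widehat w}-1|=|\EE_P(\widehat w-w)|\le\EE_P|w-\widehat w|$. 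Summing the two halves gives $\TV(Q,\widehat Q)\le\EE_P|w-\widehat w|$. Combining this with Step 2 yields the final coverage statement. This normalization step is the main point where a constant could be lost. The triangle-inequality split is what shows that the factor $\tfrac12$ from the definition of total variation exactly absorbs the normalization error.
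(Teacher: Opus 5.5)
Your proposal is correct and follows the paper's proof essentially step for step. You condition on $\widehat w$, apply weighted conformal validity under the surrogate $\widehat Q$ using scale invariance of the weighted quantile, transfer the fixed-calibration-sample coverage event from $\widehat Q$ to $Q$ at a cost of $\TV(Q,\widehat Q)$ and integrate over $P^n$, and then bound $2\TV(Q,\widehat Q)$ by the triangle inequality, with the normalization term equal to $|Z_{\widehat w}-1|\le\EE_P|w-\widehat w|$ (your appeal to $\EE_P w=1$ just makes explicit a step the paper states without comment).
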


\begin{proof}
We condition on $\widehat w$. Weighted conformal validity under $P^n\otimes\widehat Q$ gives
coverage at least $1-\alpha$. For a fixed calibration sample, the coverage event is a measurable subset of
the test space, so replacing $\widehat Q$ by $Q$ reduces its probability by at most
$\TV(Q,\widehat Q)$. Averaging over calibration proves \cref{eq:tv-transfer}.

For \cref{eq:l1-tv}, we write $\varepsilon_1=\EE_P|w-\widehat w|$ and note that
$|Z_{\widehat w}-1|\le\varepsilon_1$. Then
\[
\begin{aligned}
2\TV(Q,\widehat Q)
&=\EE_P\left|w-\frac{\widehat w}{Z_{\widehat w}}\right|\\
&\le \EE_P|w-\widehat w|
 +\EE_P\left|\widehat w-\frac{\widehat w}{Z_{\widehat w}}\right|\\
&=\varepsilon_1+|Z_{\widehat w}-1|
\le2\varepsilon_1.
\end{aligned}
\]
\end{proof}

Total variation is the exact distribution-only penalty for an arbitrary event. The
$L^1(P)$ ratio error supplies an estimator-facing interface. Large or concentrated weights can
inflate the conformal radius or drive the test-dependent quantile to $+\infty$, but they do not
alter the coverage statement. Efficiency and validity therefore separate: clipping can improve
$\ESS=(\sum_iw_i)^2/\sum_iw_i^2$ while moving the surrogate away from the intended target.

\section{Learning the change of measure}
\label{sec:classifier}

Source and target covariates can identify the ratio without target outcomes. Draw a domain label
$D\sim\Ber(\rho)$, followed by
\[
  X\mid D=0\sim P_X,
  \qquad
  X\mid D=1\sim Q_X.
\]
The mixture marginal is $M=(1-\rho)P_X+\rho Q_X$, and
$\eta(x)=\Pr(D=1\mid X=x)$ is the Bayes domain posterior. Bayes' rule yields
\begin{equation}
  w(x)=\frac{\dd Q_X}{\dd P_X}(x)
  =\frac{1-\rho}{\rho}\frac{\eta(x)}{1-\eta(x)}.
  \label{eq:posterior-odds}
\end{equation}
Let $h:\calX\to[\gamma,1-\gamma]$ be a fitted target-domain posterior and define
\[
  \widehat w_h(x)=\frac{1-\rho}{\rho}\frac{h(x)}{1-h(x)}.
\]
Its population binary log-loss regret is
\begin{equation}
  \Delta_{\log}(h)
  =R(h)-R(\eta)
  =\EE_M\!
  \left[\KL\bigl(\Ber(\eta(X))\,\|\,\Ber(h(X))\bigr)\right].
  \label{eq:regret}
\end{equation}

\begin{lemma}[Posterior-to-ratio identity]
\label{lem:exact-cancel}
Under the preceding setup,
\begin{equation}
  \EE_{P_X}|w-\widehat w_h|
  =\frac{1}{\rho}\EE_M\!\left[\frac{|\eta-h|}{1-h}\right].
  \label{eq:exact-cancel}
\end{equation}
\end{lemma}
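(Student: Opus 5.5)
The plan is to compute $w-\widehat w_h$ pointwise, simplify it with a common denominator, and then move the $P_X$-expectation to an $M$-expectation using the density $\dd P_X/\dd M$ implied by Bayes' rule.

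\textbf{Step 1: pointwise difference.} From \cref{eq:posterior-odds} and the definition of $\widehat w_h$,
\[
  w-\widehat w_h=\frac{1-\rho}{\rho}\left(\frac{\eta}{1-\eta}-\frac{h}{1-h}\right)
  =\frac{1-\rho}{\rho}\,\frac{\eta-h}{(1-\eta)(1-h)},
\]
since $\eta(1-h)-h(1-\eta)=\eta-h$.

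\textbf{Step 2: density of $P_X$ with respect to $M$.} We have $P_X\ll M$, and Bayes' rule gives
\[
  1-\eta(x)=\Pr(D=0\mid X=x)=(1-\rho)\,\frac{\dd P_X}{\dd M}(x),
  \qquad\text{so}\qquad
  \frac{\dd P_X}{\dd M}=\frac{1-\eta}{1-\rho}.
\]

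\textbf{Step 3: change of measure.} Multiplying $|w-\widehat w_h|$ by this density cancels both the factor $(1-\rho)$ and the factor $(1-\eta)$:
\[
  \EE_{P_X}|w-\widehat w_h|
  =\EE_M\!\left[\frac{1-\eta}{1-\rho}\cdot\frac{1-\rho}{\rho}\,\frac{|\eta-h|}{(1-\eta)(1-h)}\right]
  =\frac{1}{\rho}\,\EE_M\!\left[\frac{|\eta-h|}{1-h}\right].
\]
This is \cref{eq:exact-cancel}. The point of the identity is that the possibly unbounded denominator $1-\eta$ disappears, leaving only $1-h\ge\gamma$.

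\textbf{Main obstacle: the set where $\eta=1$.} There $w=+\infty$ and the Step~1 algebra is undefined. This set has $M$-measure possibly positive but $P_X$-measure zero, because $\dd P_X/\dd M\propto 1-\eta$ vanishes on it.

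\begin{itemize}
\item On the left-hand side, the integrand on $\{\eta=1\}$ therefore contributes nothing under $P_X$.
\item On the right-hand side, however, $\{\eta=1\}$ contributes $\rho^{-1}\EE_M\bigl[\ind\{\eta=1\}(1-h)/(1-h)\bigr]=\rho^{-1}M(\eta=1)$, which is nonzero if $M(\eta=1)>0$.
\end{itemize}

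The identity therefore requires $M(\eta=1)=0$, which is equivalent to $Q_X\ll P_X$, the assumption under which $w$ was defined in \cref{eq:posterior-odds}. I would state this explicitly: under $Q_X\ll P_X$ we have $\eta<1$ $M$-a.e., so Steps~1--3 hold $M$-a.e. and the computation is rigorous. Measurability of $\eta$ as a Radon--Nikodym derivative, and absolute continuity of $P_X$ with respect to $M$, are routine.
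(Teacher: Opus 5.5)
Your proof is correct and follows the paper's argument: the common-denominator identity for the odds difference, the Bayes density $\dd P_X/\dd M=(1-\eta)/(1-\rho)$, and the change of measure from $P_X$ to $M$ in which the factor $1-\eta$ cancels. Your treatment of the set $\{\eta=1\}$ adds rigor the paper omits. The paper leaves the requirement $Q_X\ll P_X$ (equivalently $M(\eta=1)=0$) implicit in its definition of $w$ via \cref{eq:posterior-odds}, and you correctly show that without it the right-hand side exceeds the left by exactly $M(\eta=1)/\rho$.
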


\begin{proof}
By Bayes' rule, $\dd P_X/\dd M=(1-\eta)/(1-\rho)$ and
$\dd Q_X/\dd M=\eta/\rho$. Since
\[
  \left|\frac{\eta}{1-\eta}-\frac{h}{1-h}\right|
  =\frac{|\eta-h|}{(1-\eta)(1-h)},
\]
we change measure from $P_X$ to $M$; the factor $1-\eta$ cancels and we obtain
\cref{eq:exact-cancel}.
\end{proof}

A generic Lipschitz bound on posterior odds would introduce two denominator factors. Our exact
identity removes one because we integrate ratio error under the source law rather than the domain
mixture.

\begin{proposition}[Classifier regret controls ratio error]
\label{prop:classifier}
For $h(x)\in[\gamma,1-\gamma]$ and $\rho\in(0,1)$,
\begin{equation}
  \EE_{P_X}|w-\widehat w_h|
  \le
  \frac{1}{\rho\gamma}
  \sqrt{\frac{\Delta_{\log}(h)}{2}}.
  \label{eq:classifier-bound}
\end{equation}
For balanced domain sampling, $\rho=1/2$,
\begin{equation}
  \EE_{P_X}|w-\widehat w_h|
  \le \frac{\sqrt{2\Delta_{\log}(h)}}{\gamma}.
  \label{eq:balanced-bound}
\end{equation}
\end{proposition}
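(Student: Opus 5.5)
The plan is to start from the exact identity of Lemma~\ref{lem:exact-cancel} and bound each factor on its right-hand side separately. Since $h(x)\in[\gamma,1-\gamma]$, we have $1-h(x)\ge\gamma$ pointwise. Hence
\[
  \EE_{P_X}|w-\widehat w_h|
  =\frac{1}{\rho}\EE_M\!\left[\frac{|\eta-h|}{1-h}\right]
  \le \frac{1}{\rho\gamma}\EE_M|\eta-h|.
\]
Only one factor of $\gamma^{-1}$ appears, because the lemma has already cancelled the $(1-\eta)$ denominator that a generic odds-map Lipschitz bound would produce.

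The remaining task is to control $\EE_M|\eta-h|$ by the log-loss regret. For each fixed $x$, the total variation between $\Ber(\eta(x))$ and $\Ber(h(x))$ equals $|\eta(x)-h(x)|$. Pinsker's inequality therefore gives
\[
  |\eta(x)-h(x)|\le\sqrt{\tfrac12\KL\bigl(\Ber(\eta(x))\,\|\,\Ber(h(x))\bigr)}.
\]
Taking expectations under $M$ and applying Jensen's inequality to the concave square root yields
\[
  \EE_M|\eta-h|\le\sqrt{\tfrac12\,\EE_M\!\left[\KL\bigl(\Ber(\eta)\,\|\,\Ber(h)\bigr)\right]}
  =\sqrt{\Delta_{\log}(h)/2}.
\]
The final equality is the regret identity \cref{eq:regret}, which follows from the standard decomposition of log-loss into Bayes risk plus expected KL divergence. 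Combining this with the first display gives \cref{eq:classifier-bound}. Setting $\rho=1/2$ turns $\tfrac{1}{\rho\gamma}\sqrt{\Delta_{\log}/2}$ into $\tfrac{2}{\gamma}\sqrt{\Delta_{\log}/2}=\sqrt{2\Delta_{\log}}/\gamma$, which is \cref{eq:balanced-bound}.

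The only delicate step is justifying the use of Lemma~\ref{lem:exact-cancel} and the regret identity when $\eta$ can reach $0$ or $1$. If $\eta=1$ on a set of positive $M$-measure, then $Q_X\not\ll P_X$ there. In that case, however, $\dd P_X/\dd M=(1-\eta)/(1-\rho)$ vanishes on that set, so the set is $P_X$-null and does not affect $\EE_{P_X}|w-\widehat w_h|$. Because $h$ is bounded away from $0$ and $1$, the KL terms stay finite, with the convention $0\log 0=0$. I would check these measure-zero conventions inside the lemma's change of measure. I expect no real obstacle beyond that; the Pinsker and Jensen steps are routine.
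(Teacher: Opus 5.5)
Your proposal is correct and follows the paper's proof. Both combine Lemma~\ref{lem:exact-cancel} with $1-h\ge\gamma$, then binary Pinsker and a square-root averaging step; the only difference is that you apply Pinsker pointwise and then Jensen, whereas the paper applies Cauchy--Schwarz to $\EE_M|\eta-h|$ and then the squared Pinsker bound, which is an immaterial reordering (your extra remark on $\{\eta=1\}$ is harmless: there the lemma's identity only weakens to an inequality in the right direction, a case the paper sidesteps by assuming $Q_X\ll P_X$ through $w=\dd Q_X/\dd P_X$).
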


\begin{proof}
From Lemma~\ref{lem:exact-cancel} and $1-h\ge\gamma$, we obtain
\[
  \EE_{P_X}|w-\widehat w_h|
  \le \frac{1}{\rho\gamma}\EE_M|\eta-h|.
\]
Binary Pinsker gives
$2(\eta-h)^2\le
\KL(\Ber(\eta)\|\Ber(h))$ pointwise. Applying Cauchy--Schwarz and \cref{eq:regret} yields
\[
  \EE_M|\eta-h|
  \le\sqrt{\EE_M(\eta-h)^2}
  \le\sqrt{\frac{\Delta_{\log}(h)}{2}}.
\]
Substitution proves \cref{eq:classifier-bound}; setting $\rho=1/2$ gives
\cref{eq:balanced-bound}.
\end{proof}

\begin{result}[End-to-end target coverage]
\label{res:end-to-end}
Under the assumptions of Propositions~\ref{prop:approx-coverage} and~\ref{prop:classifier},
\begin{equation}
  \PP_Q\{Y\in\calC_{\widehat w_h}(X)\}
  \ge
  1-\alpha-
  \frac{1}{\rho\gamma}
  \sqrt{\frac{\Delta_{\log}(h)}{2}}.
  \label{eq:end-to-end}
\end{equation}
\end{result}

The bound uses population log-loss regret. A finite-support deployment study can compute
\cref{eq:regret} by direct summation when $P_X$ and $Q_X$ are specified over an observed pool. In a
general deployment, a held-out or uniform-convergence upper bound on regret can be substituted.
The exact surrogate penalty $\TV(Q,\widehat Q)$ is sharper when it can be estimated reliably.
Transport proximity alone does not identify the reweighting ratio: in
\cref{app:transport-separation} we construct compactly supported, bounded-density laws with
$W_2\to0$ while total variation and $\chi^2$ remain fixed.

\section{Physical robot experiment}
\label{sec:experiment}

\subsection{Frozen external model and physical outcomes}

We use the released JEPA-WMs implementation and its physical Franka evaluation data
\parencite{terver2025jepawms,jepawms_repo}. We take as predictor the released \texttt{dino\_wm\_droid} checkpoint: a
DINO-WM that predicts future DINOv2 patch features from visual context and robot actions
\parencite{zhou2025dinowm,oquab2023dinov2,dinov2_repo}. Our evaluation pool contains 192 eligible
one-step transitions from 15 physical Franka episodes spanning picking, pushing, and folding
tasks. Related DROID data underlie the broader real-robot training substrate used by these model
families \parencite{khazatsky2024droid}.

We form $X$ from two observed context frames at 4 Hz and their aligned seven-dimensional
pose-difference actions. The target $Y$ is the encoder representation of the observed next frame. We take the score to be root-mean-squared error between the action-conditioned predicted
latent and the frozen encoder representation of that physical next observation. We hold the released predictor fixed; only the calibration measure changes.

\subsection{Controlled target laws on the observed pool}

Let $\Omega$ index the 192 transitions. We define $g_i\in[-1,1]$ as the empirical midrank of the
translational action magnitude for transition $i$. The statistic is determined before the target
frame and does not inspect the residual. For $\lambda\in\{0,0.5,1,1.5\}$,
\begin{equation}
  P_\lambda(i)=
  \frac{e^{-\lambda g_i}}{\sum_{j\in\Omega}e^{-\lambda g_j}},
  \qquad
  Q_\lambda(i)=
  \frac{e^{+\lambda g_i}}{\sum_{j\in\Omega}e^{+\lambda g_j}}.
  \label{eq:tilt}
\end{equation}
The support is unchanged. Conditional on an index, the physical target and frozen-model score are
unchanged. The source and target laws therefore share the conditional score mechanism, and we can write
the oracle ratio as
\[
  w_\lambda(i)=\frac{Q_\lambda(i)}{P_\lambda(i)}.
\]
At $\lambda=1.5$, this ratio ranges from $0.0498$ to $20.09$, and
$\TV(P_\lambda,Q_\lambda)=0.637$.

We draw a balanced domain sample independently of conformal calibration and fit $h(i)=\Pr(D=1\mid g_i)$
by logistic regression on the preregistered feature $g$; we restrict posterior values to
$[0.05,0.95]$ and convert them to odds. The population model is correctly specified because
\[
  \log\frac{Q_\lambda(i)}{P_\lambda(i)}=c_\lambda+2\lambda g_i.
\]
We compute the oracle ratio, Bayes posterior, population classifier regret, $\EE_P|w-\widehat w|$,
and $\TV(Q,\widehat Q)$ by finite summation.

For calibration we draw 500 source transitions with replacement; for target evaluation we
draw 10,000 transitions, and we repeat the full protocol fifty times for each
$\alpha\in\{0.05,0.10,0.20\}$. We compare four methods: ordinary source conformal calibration,
oracle ratio weighting, learned ratio weighting, and learned ratio weighting clipped at five. We
report Student-$t$ intervals across repetitions; episode-cluster bootstrap intervals appear in
the appendix.

\section{Results}
\label{sec:results}

\subsection{Controlled covariate shift breaks ordinary calibration}

The target score law moves as $\lambda$ increases. At nominal 90\% coverage, ordinary source
calibration falls from 90.2\% at $\lambda=0$ to 81.6\%, 69.0\%, and 55.0\%. Oracle weighting
returns 90.5\%, 91.8\%, and 91.0\% over the same nonzero shifts. Learned weighting gives 90.6\%,
91.8\%, and 90.9\%. We hold the predictor identical across methods; the difference is the calibration
measure.

\begin{figure}[!htbp]
  \centering
  \includegraphics[width=\textwidth]{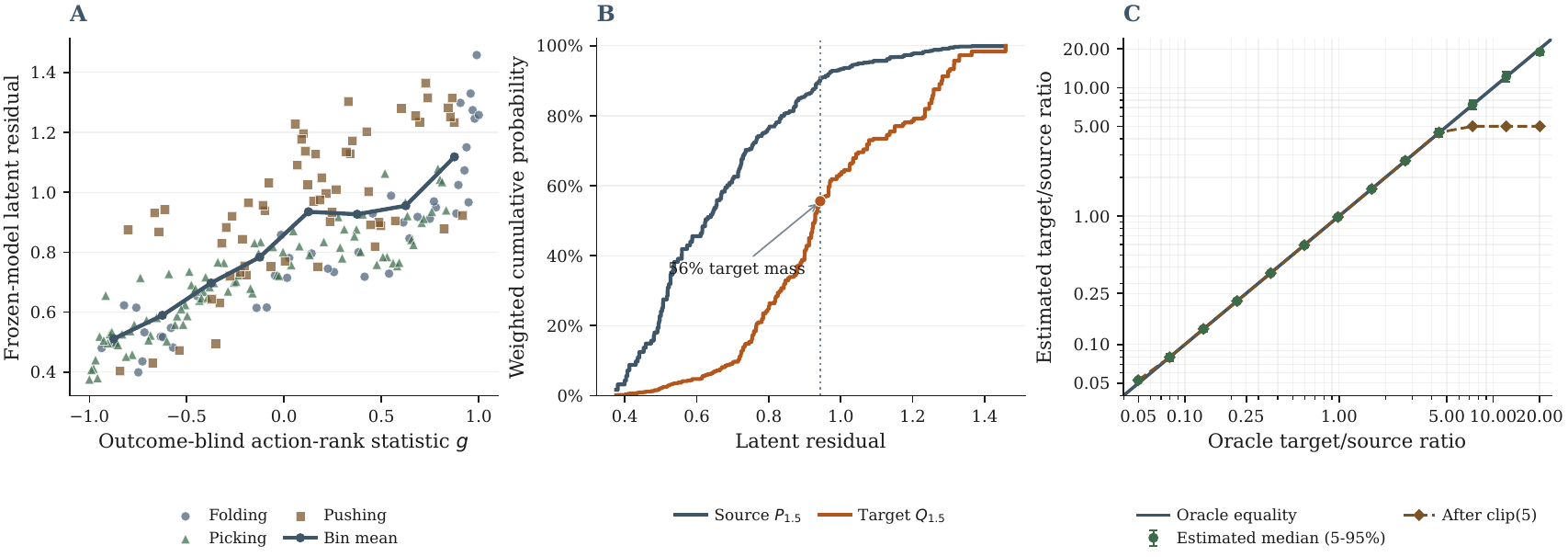}
  \caption{The controlled shift acts on physical robot transitions. \textbf{A}: the outcome-blind
  action statistic is strongly associated with the frozen model's latent residual, so changing its
  mass changes the score law. \textbf{B}: the source 90th-percentile score contains 55--56\% of
  target mass. \textbf{C}: at observed pool points, markers show the median learned ratio and
  whiskers show the 5--95\% interval across 50 domain-classifier fits; the diagonal is oracle
  equality and clip(5) truncates the target tail.}
  \label{fig:shift-geometry}
\end{figure}

\begin{figure}[!htbp]
  \centering
  \includegraphics[width=\textwidth]{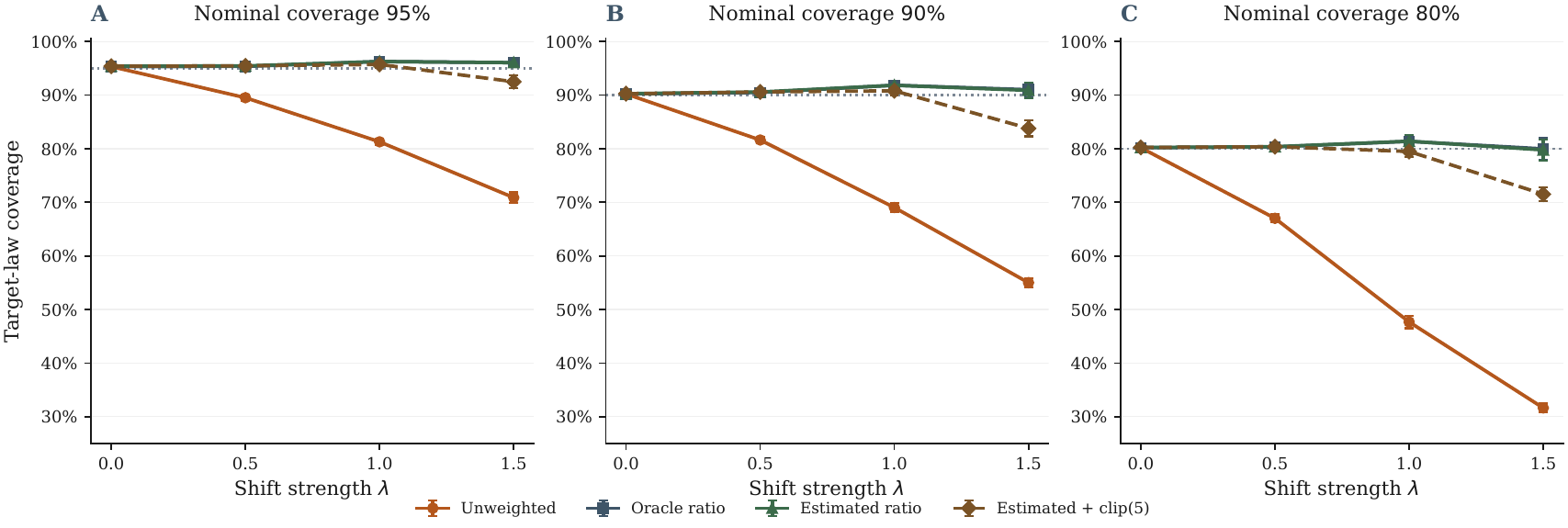}
  \caption{Target-law coverage across the preregistered levels. Ordinary conformal calibration
  loses coverage as the source and target measures separate. Oracle and learned ratios restore
  the target-law guarantee at nominal 95\%, 90\%, and 80\% coverage. Error bars are 95\%
  Student-$t$ intervals across 50 repetitions.}
  \label{fig:coverage}
\end{figure}

\begin{table}[!htbp]
  \centering
  \caption{Coverage, effective sample size, and radius at nominal 90\% coverage. Ordinary
  calibration loses coverage as shift strength grows, while learned-ratio calibration tracks the
  oracle. Effective sample size and radius refer to the learned-ratio method.}
  \label{tab:headline}
  \small
  \resizebox{\textwidth}{!}{\input{table_headline.tex}}
\end{table}

At $\lambda=1.5$, learned weighting recovers 35.9 percentage points relative to unweighted
calibration. The mean radius rises from 1.183 at $\lambda=0$ to 1.281, while the infinite-radius
rate remains zero.

\subsection{The learned ratio reproduces the oracle correction}

The population domain-logit slopes are $2\lambda\in\{0,1,2,3\}$. Fitted mean slopes are 0.0016,
1.0065, 2.0012, and 3.0052. Across the grid, population log-loss regret remains between
$5.3\times10^{-5}$ and $8.3\times10^{-5}$ for the learned-ratio method. The resulting normalized
surrogate is 0.7--0.9\% from the target in total variation.

\begin{figure}[!htbp]
  \centering
  \includegraphics[width=\textwidth]{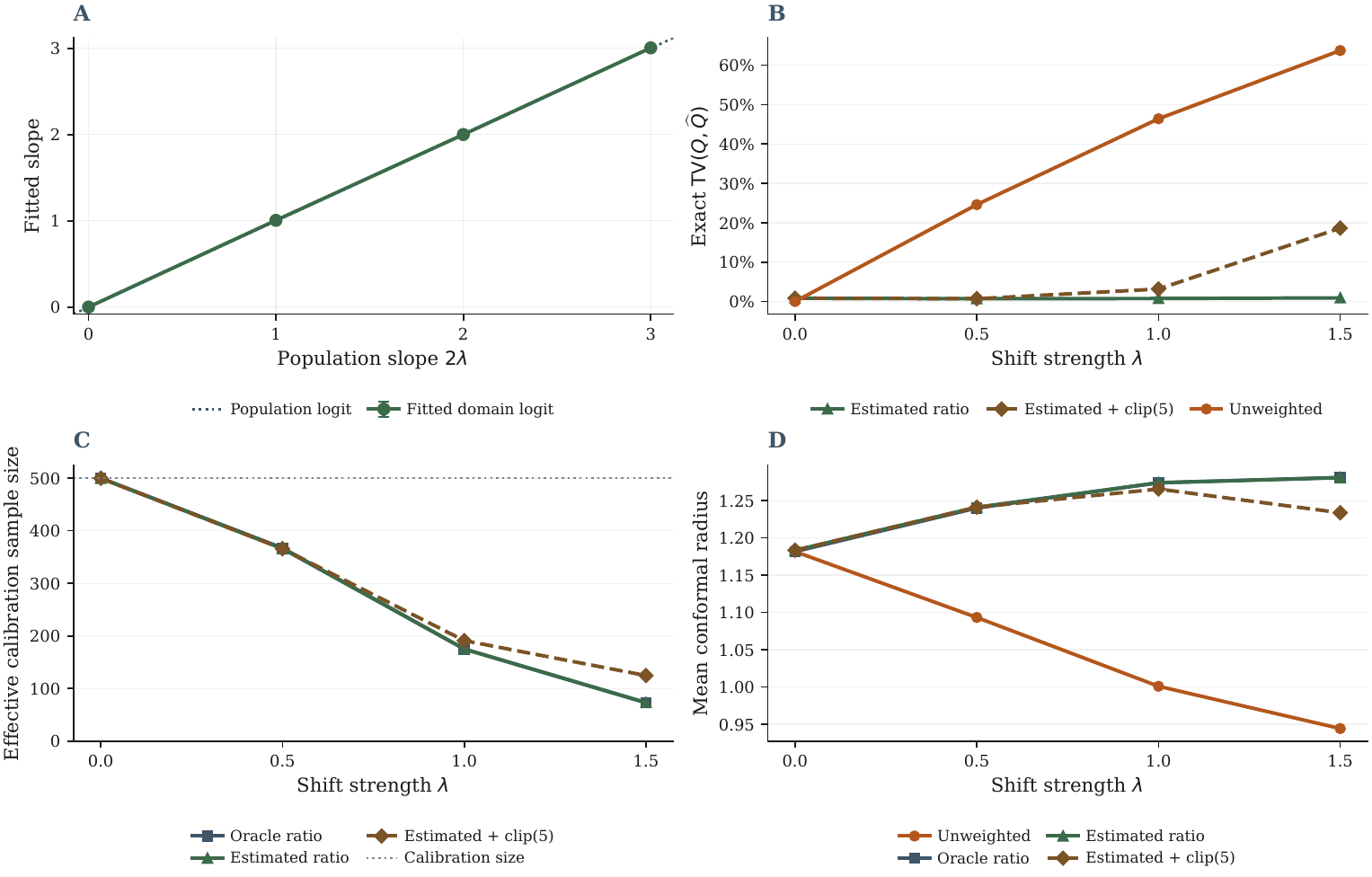}
  \caption{Ratio estimation and efficiency. \textbf{A}: fitted domain-classifier slopes recover
  the population log-odds. \textbf{B}: exact total variation between the target and induced
  surrogate remains below 1\% for learned weights; clipping departs at stronger shift.
  \textbf{C}: effective calibration size falls as importance weights concentrate.
  \textbf{D}: conformal radii widen in response.}
  \label{fig:estimation}
\end{figure}

Effective sample size falls from 500 to 73 as $\lambda$ reaches 1.5. Oracle and learned weighting
retain target coverage; weight concentration appears in wider radii and repetition intervals,
whereas surrogate error controls the coverage penalty.
\FloatBarrier

\subsection{Surrogate mismatch explains clipping failure}

For learned weights at nominal 90\%, the exact-TV lower bounds are 89.2\%, 89.3\%, 89.2\%, and
89.1\%. Observed coverage lies above each bound. The classifier-regret bounds are 70.8\%, 72.4\%,
68.7\%, and 66.5\%. Their looseness comes from the global denominator bound $1-h\ge\gamma$;
most posterior values lie farther from one.

\begin{figure}[H]
  \centering
  \includegraphics[width=\textwidth]{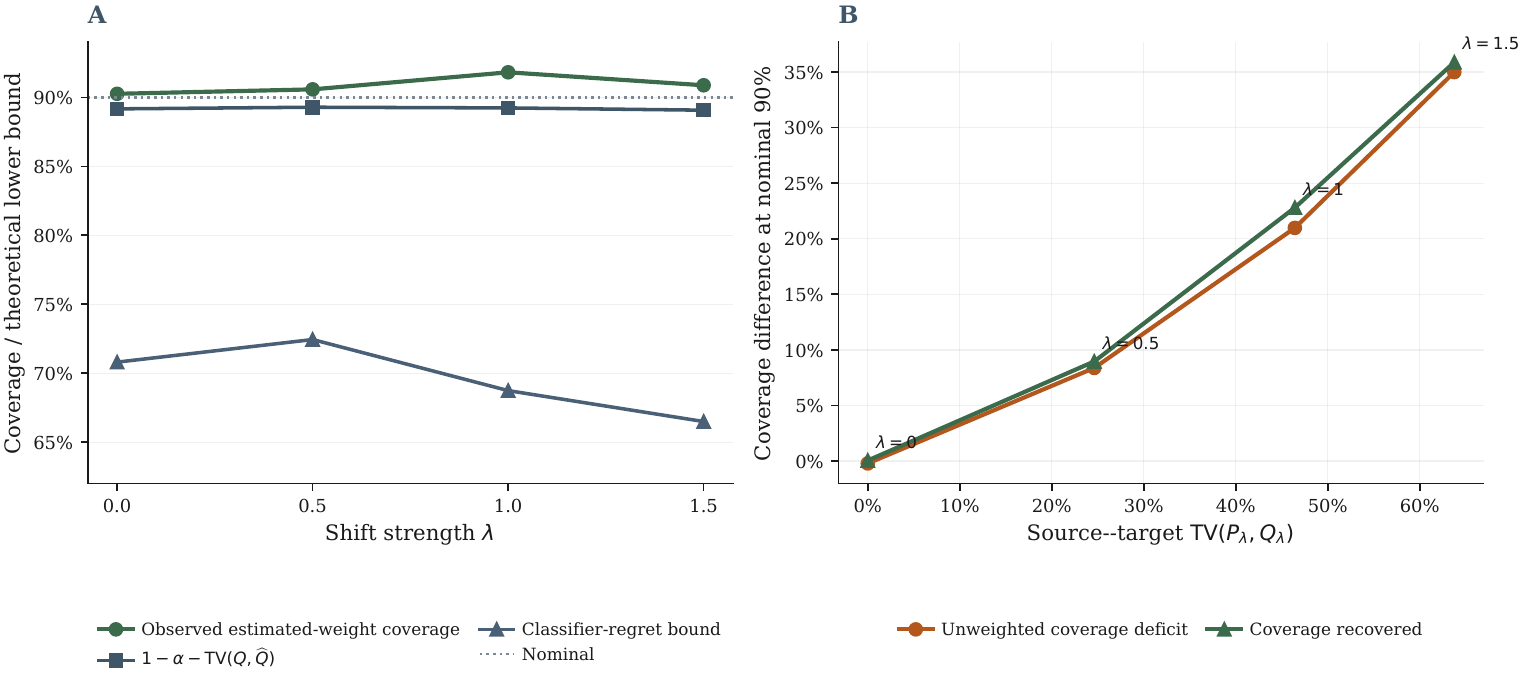}
  \caption{Coverage and theoretical lower bounds at nominal 90\% coverage. \textbf{A}: learned-weight coverage,
  the exact-TV bound, and the classifier-regret bound. \textbf{B}: ordinary coverage
  deficit and coverage recovered by learned weighting as source--target total variation grows.}
  \label{fig:diagnostics}
\end{figure}

\begin{table}[H]
  \centering
  \caption{Observed learned-weight coverage and the two theoretical lower bounds at nominal
  90\%. Population log-loss regret is computed exactly over the physical transition pool.}
  \label{tab:bounds}
  \small
  \input{table_bounds.tex}
\end{table}

Clipping at five leaves the method unchanged through $\lambda=0.5$, then moves its surrogate away
from the target. At $\lambda=1.5$, clipping raises effective sample size from 73 to 125 but raises
$\TV(Q,\widehat Q)$ from 0.9\% to 18.7\%; coverage falls from 90.9\% to 83.8\%. Clipping increases
effective sample size by changing the law being calibrated.
\FloatBarrier

\section{Relation to prior work}
\label{sec:related}

\paragraph{World models.}
Latent dynamics, stochastic state-space models, predictive representations, and token dynamics
have been developed to support planning and policy learning
\parencite{ha2018worldmodels,hafner2019planet,hafner2020dreamer,micheli2023iris,hansen2024tdmpc2}.
Physical-world applications include robot learning \parencite{wu2023daydreamer}, zero-shot
planning \parencite{zhou2025dinowm,assran2025vjepa2,terver2025jepawms}, and policy evaluation
\parencite{quevedo2025worldgym}. Adjacent lines adapt a policy inside a learned model
\parencite{chandra2025diwa} and establish that general goal-directed agents must have learned a
predictive model \parencite{richens2025worldmodels}. After action selection changes, calibration
concerns the distribution under which the fixed model's residual is evaluated.

\paragraph{Conformal inference under shift.}
Split conformal inference provides distribution-free marginal prediction under exchangeability
\parencite{vovk2005algorithmic,lei2018distributionfree}. Weighted conformal prediction handles
covariate shift through a known or estimated likelihood ratio
\parencite{tibshirani2019covariateshift}; broader weighting and adaptive schemes handle other
forms of nonexchangeability \parencite{barber2023beyond,gibbs2021adaptive}. Robust validation
bounds predictive performance under distribution shift \parencite{cauchois2024robustvalidation},
while conformal methods have been applied to controllers and trajectory forecasts
\parencite{chee2024controllers,zhou2024trajectories}. Latent safety filters also use conformal
thresholds to detect uncertain world-model predictions \parencite{seo2025latentsafety}.
Intervention identification supplies the ratio for weighted calibration; ratio-estimation error
determines the target-law coverage penalty.

\paragraph{Causal and off-policy correction.}
Propensity-weighted conformal inference has been developed for static treatment counterfactuals
\parencite{lei2021counterfactual}. Policy interventions introduce a state occupancy as well as an
action propensity. Off-policy evaluation and stationary-distribution correction study these
ratios for returns \parencite{jiang2016doubly,liu2018horizon,nachum2019dualdice}. The same
change-of-measure structure governs marginal coverage of one-step world-model prediction sets.

\paragraph{Density-ratio estimation.}
Covariate-shift adaptation can estimate ratios directly or through source--target classification
\parencite{sugiyama2007direct,bickel2009covariateshift}. Proper-loss links between class
probabilities and density ratios are developed by \textcite{menon2016linking}. Integrating ratio
error under the source law, we obtain the $(\rho\gamma)^{-1}$ constant in
Lemma~\ref{lem:exact-cancel}.

\section{Discussion}
\label{sec:discussion}

Training a world model attempts to learn $K(\dd y\mid x)$. Deploying a policy chooses the law of
$X$. A new policy need not force a new world model when the learned mechanism remains useful; it
does force a new statement
about which queries receive calibrated coverage. Occupancy and propensity identify that statement.
Unlabeled target covariates can estimate it before the corresponding outcomes arrive.

Total variation measures the actual surrogate mismatch and is the sharp event-transfer penalty.
Classifier regret converts a
standard learning objective into a ratio-error bound without access to the true ratio. Its conservatism
in our experiment comes from a global clipping parameter. Replacing $1/\gamma$ by a localized
control of $\EE_M[|\eta-h|/(1-h)]$ would tighten learned coverage bounds.

\section{Limitations}

Our experiment uses controlled tilts on a fixed pool of 192 physical transitions, and we compute
population classifier regret by finite summation. Our guarantees are one-step and marginal and
require shared conditional dynamics, target support within source support, and pre-outcome
weights; mechanism shift, including generated dynamics \parencite{jang2025dreamgen}, and joint
trajectory coverage require additional data or methods.

\section*{Disclosure}

We used a large language model for prose, document organization, review, and grammar assistance.

\FloatBarrier
\printbibliography[title={References}]

\appendix

\section{Transport distance and reweighting error}
\label{app:transport-separation}

A geometric distance can be small while the likelihood ratio oscillates at fine scale. The
separation persists under compact support, bounded densities, and bounded log-ratios.

\begin{proposition}[Bounded-density separation]
\label{prop:w2-counterexample}
Fix $a\in(0,1)$. Let $P_n$ be uniform on $[0,1]$. Partition the interval into $2n$ adjacent cells
of width $1/(2n)$, and let $Q_n$ have density $1+a$ on even cells and $1-a$ on odd cells. Then
\begin{align}
  \left\|\log\frac{\dd Q_n}{\dd P_n}\right\|_\infty
  &\le \max\{\log(1+a),-\log(1-a)\},\\
  \chi^2(Q_n\|P_n)&=a^2,
  &\TV(Q_n,P_n)&=\frac a2,\\
  W_1(P_n,Q_n)&\le\frac{a}{4n},
  &W_2(P_n,Q_n)^2&\le\frac{a}{8n^2}.
\end{align}
Consequently, no function $g(r)\to0$ can upper-bound $\chi^2(Q\|P)$ or $\TV(Q,P)$ from
$W_2(P,Q)\le r$ under either regularity class.
\end{proposition}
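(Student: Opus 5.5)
The statement concerns an explicit construction, so we verify each quantity by direct computation on a single cell pair and then derive the impossibility consequence. Write $f_n=\dd Q_n/\dd P_n$, which equals $1+a$ on even cells and $1-a$ on odd cells. There are $n$ even and $n$ odd cells, so $\int f_n=\tfrac12(1+a)+\tfrac12(1-a)=1$ and $Q_n$ is a probability law. The log-ratio bound is immediate, since $\log f_n$ takes only the two values $\log(1+a)$ and $\log(1-a)$. Next, $\chi^2(Q_n\|P_n)=\int(f_n-1)^2\dd x=a^2$, because $|f_n-1|=a$ everywhere. Likewise $\TV=\tfrac12\int|f_n-1|=a/2$. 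Both quantities are independent of $n$.

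For the transport bounds, I construct an explicit coupling that moves mass only within each adjacent pair (even cell, odd cell). Each pair has width $1/n$. The excess mass $a/(2n)$ sitting on the even cell must be sent to the deficit on the odd cell. Within a pair the mass balances exactly, so local transport suffices and the global $W_p$ is at most the sum of the per-pair costs.

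In a single pair, take the monotone (one-dimensional optimal) map between the restricted densities. Equivalently, bound the cost by $\int|F_{P}-F_{Q}|$ for $W_1$. On the even cell of length $\ell=1/(2n)$ the CDF difference grows linearly to $a\ell$; on the odd cell it decays linearly back to $0$. This gives $\int|F_P-F_Q|=a\ell^2=a/(4n^2)$ per pair. Summing over $n$ pairs yields $W_1\le a/(4n)$.

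For $W_2$, I will not compute the exact quadratic cost. I use the crude coupling that leaves the common mass in place and moves only the excess $a\ell$ per pair, each unit over a distance at most $2\ell=1/n$. Per pair the cost is then at most $a\ell\cdot(1/n)^2=a/(2n^3)$, and over $n$ pairs the total is $a/(2n^2)$. This is weaker than the stated $a/(8n^2)$. To reach the stated constant, I will instead pair the excess on the even cell with the deficit on the odd cell by a translation of exactly $\ell$: the point $x$ in the even cell goes to $x+\ell$, carrying mass $a$ per unit length. That costs $a\ell\cdot\ell^2=a/(8n^3)$ per pair and $a/(8n^2)$ in total. This translation coupling also reproduces the $W_1$ bound $a\ell\cdot\ell\cdot n=a/(4n)$.

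Two points need checking. First, the translation coupling must be valid: the leftover mass $1-a$ stays fixed on both cells, and the translated $a$-layer exactly fills the odd cell's deficit up to density $1$ relative to $Q_n$'s $1-a$. I also need to confirm the orientation, that $Q_n$ has the excess where $P_n$ has less, and the symmetric case is handled the same way. This bookkeeping is the main, if mild, obstacle.

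Second, the consequence follows by contradiction. Suppose a function $g$ with $g(r)\to0$ bounds $\TV$ or $\chi^2$ whenever $W_2\le r$. Take $r_n=\sqrt{a/8}/n\to0$. The laws $Q_n$ are compactly supported, have densities bounded in $[1-a,1+a]$, and have bounded log-ratios, so they belong to either regularity class. Then $a/2=\TV(Q_n,P_n)\le g(r_n)\to0$ and $a^2=\chi^2(Q_n\|P_n)\le g(r_n)\to0$, both contradicting $a>0$.
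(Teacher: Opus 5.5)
Your proposal is correct and follows the paper's proof. The log-ratio, $\chi^2$ and TV claims come from direct integration of the two-valued ratio $1\pm a$. The transport bounds come from the coupling that leaves the common mass in place and translates the $a$-layer by one cell width $1/(2n)$; the paper moves $P_n$'s excess from each odd cell leftward, you move $Q_n$'s excess from each even cell rightward, and this is the same coupling transposed. It gives $W_1\le a/(4n)$ and $W_2^2\le a/(8n^2)$, and the impossibility claim follows as $n\to\infty$.

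One bookkeeping correction. The mass left fixed is the common part $\min\{1,f_n\}$, i.e.\ density $1$ on the even cell and $1-a$ on the odd cell, not $1-a$ on both. Also, gluing per-pair couplings bounds $W_p^p$, not $W_p$, by the sum of per-pair costs. The CDF computation and the crude coupling are not needed.
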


\begin{proof}
The ratio is $1\pm a$, so the log-ratio, $\chi^2$, and total-variation claims follow by direct
integration. We couple $P_n$ and $Q_n$ by leaving common mass fixed and moving the excess $a$-fraction
from each odd cell one cell to the left. The moved mass is $a/2$ and its distance is $1/(2n)$,
which gives the stated $W_1$ and squared-$W_2$ costs.
\end{proof}

\begin{figure}[!htbp]
  \centering
  \includegraphics[width=\textwidth]{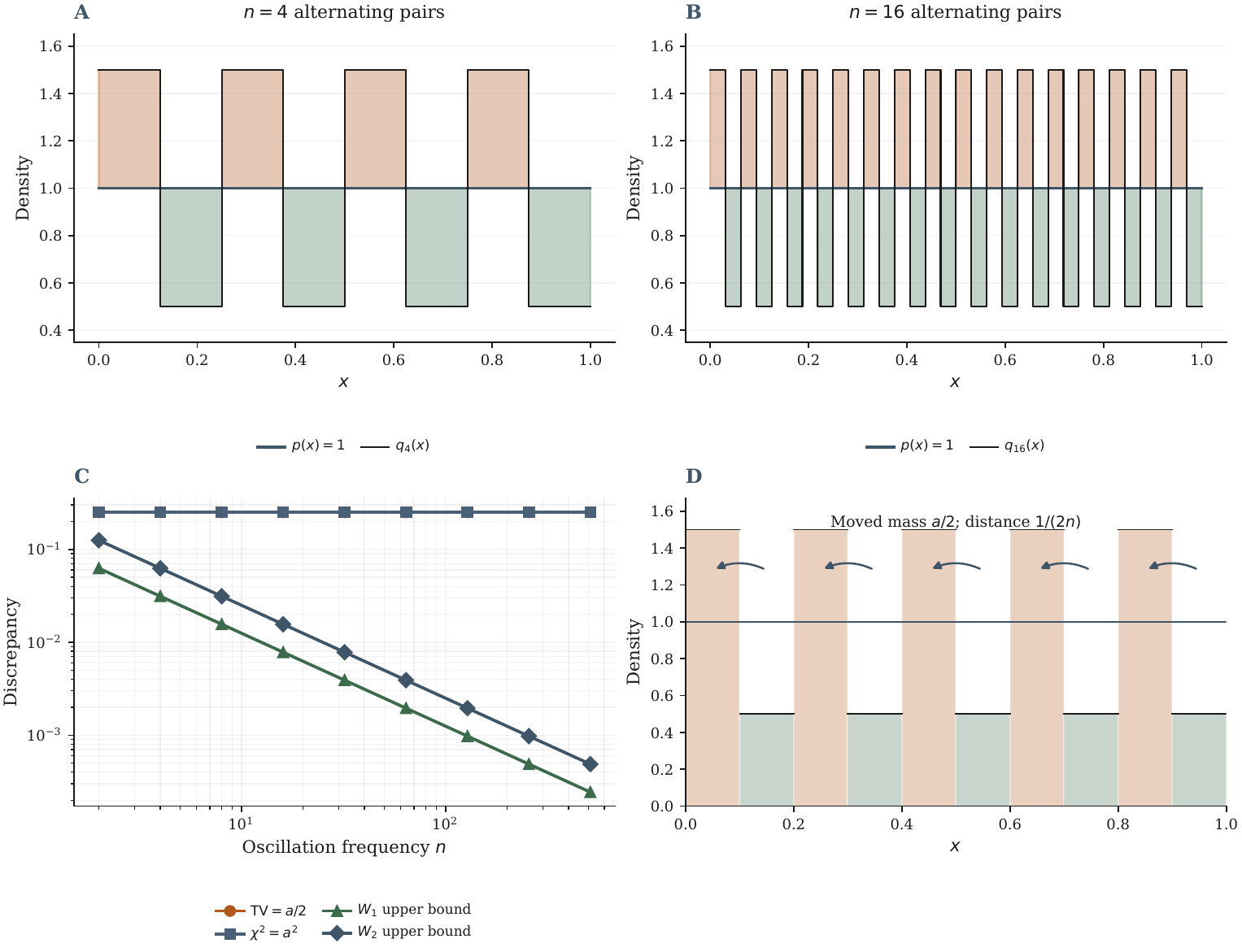}
  \caption{Fine-scale density oscillation separates transport geometry from reweighting error.
  Panels A--B show bounded densities on a fixed compact support. Panel C shows $W_1,W_2\to0$
  while total variation and $\chi^2$ remain fixed. Panel D gives the explicit local coupling.}
\label{fig:w2}
\end{figure}
\FloatBarrier

Wasserstein distance controls Lipschitz expectations and coverage events smoothed under score
anti-concentration \parencite{villani2009optimal}. It does not identify the likelihood ratio in
Proposition~\ref{prop:approx-coverage}.

\subsection{Score-event transport under anti-concentration}

Let $D_1(P,Q)$ be the dual Lipschitz discrepancy, equal to $W_1$ under the usual
Kantorovich--Rubinstein conditions \parencite{villani2009optimal}. If a score $r$ is $L$-Lipschitz
and
\[
  P(q-u<r\le q)\le\kappa u,
\]
then smoothing the indicator $\ind\{r\le q\}$ over a band of width $u$ yields
\[
  Q(r\le q)
  \ge P(r\le q)-\kappa u-\frac{L}{u}D_1(P,Q).
\]
Optimizing, we obtain a penalty $2\sqrt{\kappa L D_1(P,Q)}$ when the optimizer lies in the stated
anti-concentration range. This controls a particular score event; it does not control the density
ratio used by weighted conformal calibration.

\section{Proof details}
\label{app:proofs}

\subsection{Weighted conformal validity under the surrogate law}

Let $\mathcal D=(Z_1,\ldots,Z_n)$ and
$E_{\mathcal D}=\{(x,y):y\in\calC_{\widehat w}(x;\mathcal D)\}$. Because
$\dd\widehat Q/\dd P=\widehat w/Z_{\widehat w}$ and the weighted quantile is invariant to global
scaling, the covariate-shift conformal result of \textcite{tibshirani2019covariateshift} gives
\[
  \int \widehat Q(E_{\mathcal D})P^n(\dd\mathcal D)\ge1-\alpha.
\]
For each $\mathcal D$,
$Q(E_{\mathcal D})\ge\widehat Q(E_{\mathcal D})-\TV(Q,\widehat Q)$. Integration gives
\cref{eq:tv-transfer}. Normalizing $\widehat Q$ gives \cref{eq:l1-tv}. If $\widehat w$ is random
but fitted independently, the proof applies
conditionally on its training data.

\subsection{A finite-sample classifier interface}

Let $\mathcal H$ be a clipped posterior class. Suppose that, with probability at least
$1-\delta$,
\[
  \sup_{f\in\mathcal H}|R(f)-\widehat R_N(f)|\le u_N(\delta),
\]
that $\eta\in\mathcal H$, and that $h$ is a $\zeta$-approximate empirical risk minimizer. Then
\[
  \Delta_{\log}(h)\le 2u_N(\delta)+\zeta.
\]
Substitution into \cref{eq:classifier-bound} yields a high-probability ratio bound. For a
finite clipped class of size $K$, log loss is bounded by
$L_\gamma=\log(1/\gamma)$ and Hoeffding's inequality gives
\[
  u_N(\delta)=L_\gamma
  \sqrt{\frac{\log(2K/\delta)}{2N}}.
\]
We evaluate population regret exactly over $\Omega$.

\section{Algorithm}
\label{app:algorithm}

\begin{algorithm}[H]
\caption{One-step interventional conformal calibration}
\label{alg:method}
\begin{algorithmic}[1]
\Require fixed world model $m$; source calibration triples $(x_i,y_i)_{i=1}^n$; independent
source/target covariates for ratio fitting; level $\alpha$
\State identify $X$ so that the intended intervention preserves $P(Y\mid X)$
\State fit a domain posterior $h(x)=\Pr(D=1\mid X=x)$ with target-domain prior $\rho$
\State set $\widehat w(x)\gets\frac{1-\rho}{\rho}\frac{h(x)}{1-h(x)}$
\State compute source scores $R_i\gets r_m(x_i,y_i)$
\For{query $x$}
  \State form the weighted score measure in \cref{eq:weighted-measure}
  \State compute its $(1-\alpha)$ quantile $q_{\widehat w}(x)$
  \State return $\calC_{\widehat w}(x)=\{y:r_m(x,y)\le q_{\widehat w}(x)\}$
\EndFor
\end{algorithmic}
\end{algorithm}

\section{Experimental details}
\label{app:experiment}

\subsection{External assets and score construction}

We take the model and data loader from the official JEPA-WMs repository
\parencite{jepawms_repo}. The frozen DINO-WM uses a DINOv2 ViT-S/14 visual encoder and an
action-conditioned predictor at $224\times224$ resolution
\parencite{zhou2025dinowm,oquab2023dinov2}. Each eligible sample contains two context frames and
one observed next frame from the exterior camera. Pose differences produce the seven-dimensional
robot action representation used by the released model. We take the prediction score to be latent RMSE over
the predicted and observed next-frame patch representations.

The 192 transitions comprise 43 folding, 74 picking, and 75 pushing samples. Scores have mean
0.815, standard deviation 0.240, and range 0.376--1.458. The shift statistic and score have Pearson
correlation 0.800 and Spearman correlation 0.818 on this pool; we specified the shift statistic
without reading target frames or residuals.

\subsection{Sampling and uncertainty intervals}

For each $\lambda$, we draw independent source and target domain-classifier samples, a source
calibration sample of size 500, and a target evaluation sample of size 10,000. We fit logistic
regression with deterministic ridge regularization and repeat the full protocol 50 times. We form
coverage intervals in the main figures with the Student-$t$ distribution across repetitions; we
also compute Wilson intervals within each target sample, Bonferroni intervals across the full
preregistered grid, and an episode-cluster bootstrap that resamples the 15 physical episodes. 
\subsection{Complete results}

\input{table_full_results.tex}

\begin{figure}[!htbp]
  \centering
  \includegraphics[width=.72\textwidth]{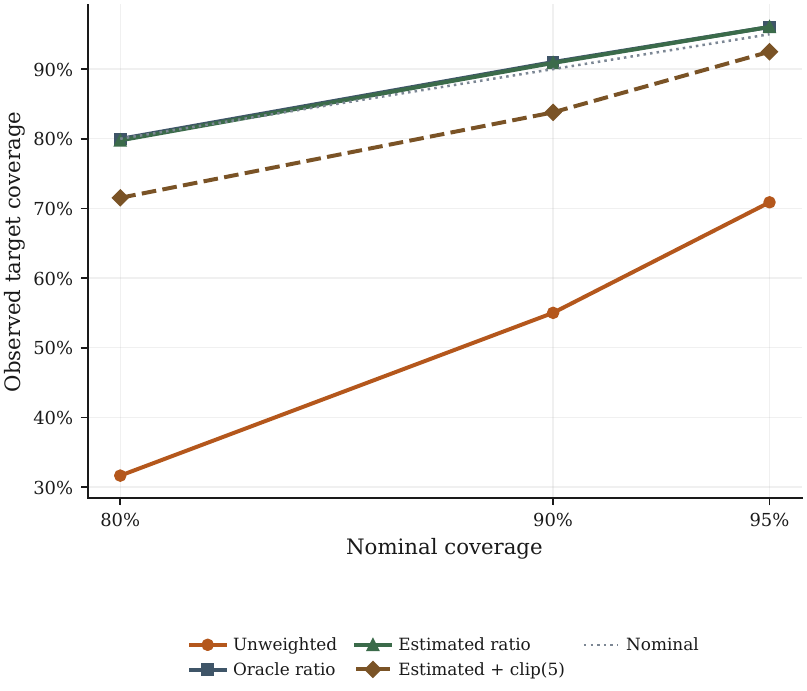}
  \caption{Observed versus nominal coverage at the strongest shift, $\lambda=1.5$. Oracle and
  learned-ratio calibration remain near the diagonal across the three preregistered levels;
  unweighted and clipped methods fall below it.}
  \label{fig:alpha-profile}
\end{figure}
\FloatBarrier

\section{Formalization}
\label{app:lean}

We encode in Lean 4 \parencite{demoura2021lean4} using Mathlib \parencite{mathlib2020} the
finite-support versions of Lemma~\ref{lem:exact-cancel}, Proposition~\ref{prop:classifier}, and
Proposition~\ref{prop:policy-ratio}, together with the arithmetic composition behind
Result~\ref{res:end-to-end}. The machine-checked content is the posterior-odds cancellation,
binary Pinsker, the finite weighted Cauchy--Schwarz step, the $(\rho\gamma)^{-1}$ classifier bound
and its balanced specialization, and the common-state propensity reweighting identity. We take the
weighted-conformal transfer inequality of \textcite{tibshirani2019covariateshift} as an explicit
premise and do not re-formalize it. The development is a single self-contained Lean package
available at the \href{https://github.com/noumenal-ai/conformal-realrobot-lab/tree/8446f05a094d42752569751945c96be70666b701/lab/lean}{commit-pinned repository},
which maps each result to its declaration; it builds against a commit-pinned
\href{https://github.com/leanprover-community/mathlib4}{\texttt{mathlib4}} revision and requires
no further dependency.

\end{document}

%% file: table_headline.tex
\begin{tabular}{@{}rrrrrrrr@{}}
\toprule
$\lambda$ & $\TV(P_\lambda,Q_\lambda)$ & Unweighted & Oracle & Estimated & Clip(5) & ESS$_{\mathrm{est}}$ & Radius$_{\mathrm{est}}$ \\
\midrule
0 & 0.0\% & 90.2\% & 90.2\% & 90.3\% & 90.3\% & 499.7 & 1.183 \\
0.5 & 24.6\% & 81.6\% & 90.5\% & 90.6\% & 90.6\% & 365.8 & 1.241 \\
1 & 46.4\% & 69.0\% & 91.8\% & 91.8\% & 90.8\% & 175.1 & 1.274 \\
1.5 & 63.7\% & 55.0\% & 91.0\% & 90.9\% & 83.8\% & 73.2 & 1.281 \\
\bottomrule
\end{tabular}

%% file: table_bounds.tex
\begin{tabular}{@{}rrrrrr@{}}
\toprule
$\lambda$ & Observed & Exact-TV bound & Regret bound & $\TV(Q,\widehat Q)$ & $\Delta_{\log}$ \\
\midrule
0 & 90.3\% & 89.2\% & 70.8\% & 0.8\% & 7.08e-05 \\
0.5 & 90.6\% & 89.3\% & 72.4\% & 0.7\% & 5.34e-05 \\
1 & 91.8\% & 89.2\% & 68.7\% & 0.8\% & 7.40e-05 \\
1.5 & 90.9\% & 89.1\% & 66.5\% & 0.9\% & 8.29e-05 \\
\bottomrule
\end{tabular}

%% file: table_full_results.tex
\begin{longtable}{@{}rrlrrrr@{}}
\caption{Complete target-law coverage results. Intervals are 95\% Student-$t$ intervals across 50 repetitions.}\label{tab:full-results}\\
\toprule
$\lambda$ & Nominal & Method & Coverage & 95\% interval & Radius & ESS \\
\midrule
\endfirsthead
\toprule
$\lambda$ & Nominal & Method & Coverage & 95\% interval & Radius & ESS \\
\midrule
\endhead
0 & 95.0\% & Unweighted & 95.3\% & [95.0, 95.6]\% & 1.271 & 500.0 \\
0 & 95.0\% & Oracle & 95.3\% & [95.0, 95.6]\% & 1.271 & 500.0 \\
0 & 95.0\% & Estimated & 95.4\% & [95.1, 95.7]\% & 1.272 & 499.7 \\
0 & 95.0\% & Estimated + clip(5) & 95.4\% & [95.1, 95.7]\% & 1.272 & 499.7 \\
\addlinespace[2pt]
0 & 90.0\% & Unweighted & 90.2\% & [89.8, 90.6]\% & 1.182 & 500.0 \\
0 & 90.0\% & Oracle & 90.2\% & [89.8, 90.6]\% & 1.182 & 500.0 \\
0 & 90.0\% & Estimated & 90.3\% & [89.8, 90.7]\% & 1.183 & 499.7 \\
0 & 90.0\% & Estimated + clip(5) & 90.3\% & [89.8, 90.7]\% & 1.183 & 499.7 \\
\addlinespace[2pt]
0 & 80.0\% & Unweighted & 80.2\% & [79.7, 80.7]\% & 0.996 & 500.0 \\
0 & 80.0\% & Oracle & 80.2\% & [79.7, 80.7]\% & 0.996 & 500.0 \\
0 & 80.0\% & Estimated & 80.2\% & [79.7, 80.7]\% & 0.997 & 499.7 \\
0 & 80.0\% & Estimated + clip(5) & 80.2\% & [79.7, 80.7]\% & 0.997 & 499.7 \\
\addlinespace[2pt]
\midrule
0.5 & 95.0\% & Unweighted & 89.5\% & [88.9, 90.1]\% & 1.232 & 500.0 \\
0.5 & 95.0\% & Oracle & 95.4\% & [94.9, 95.8]\% & 1.293 & 367.1 \\
0.5 & 95.0\% & Estimated & 95.4\% & [95.0, 95.9]\% & 1.293 & 365.8 \\
0.5 & 95.0\% & Estimated + clip(5) & 95.4\% & [95.0, 95.9]\% & 1.293 & 365.8 \\
\addlinespace[2pt]
0.5 & 90.0\% & Unweighted & 81.6\% & [81.0, 82.3]\% & 1.093 & 500.0 \\
0.5 & 90.0\% & Oracle & 90.5\% & [89.8, 91.2]\% & 1.240 & 367.1 \\
0.5 & 90.0\% & Estimated & 90.6\% & [89.9, 91.2]\% & 1.241 & 365.8 \\
0.5 & 90.0\% & Estimated + clip(5) & 90.6\% & [89.9, 91.2]\% & 1.241 & 365.8 \\
\addlinespace[2pt]
0.5 & 80.0\% & Unweighted & 67.0\% & [66.2, 67.8]\% & 0.942 & 500.0 \\
0.5 & 80.0\% & Oracle & 80.3\% & [79.5, 81.1]\% & 1.070 & 367.1 \\
0.5 & 80.0\% & Estimated & 80.4\% & [79.6, 81.2]\% & 1.070 & 365.8 \\
0.5 & 80.0\% & Estimated + clip(5) & 80.4\% & [79.6, 81.2]\% & 1.070 & 365.8 \\
\addlinespace[2pt]
\midrule
1 & 95.0\% & Unweighted & 81.3\% & [80.7, 81.8]\% & 1.166 & 500.0 \\
1 & 95.0\% & Oracle & 96.3\% & [95.6, 96.9]\% & 1.315 & 175.0 \\
1 & 95.0\% & Estimated & 96.3\% & [95.6, 96.9]\% & 1.315 & 175.1 \\
1 & 95.0\% & Estimated + clip(5) & 95.7\% & [95.1, 96.3]\% & 1.307 & 191.0 \\
\addlinespace[2pt]
1 & 90.0\% & Unweighted & 69.0\% & [68.1, 69.9]\% & 1.001 & 500.0 \\
1 & 90.0\% & Oracle & 91.8\% & [91.1, 92.6]\% & 1.274 & 175.0 \\
1 & 90.0\% & Estimated & 91.8\% & [91.1, 92.6]\% & 1.274 & 175.1 \\
1 & 90.0\% & Estimated + clip(5) & 90.8\% & [89.9, 91.7]\% & 1.266 & 191.0 \\
\addlinespace[2pt]
1 & 80.0\% & Unweighted & 47.7\% & [46.5, 48.9]\% & 0.895 & 500.0 \\
1 & 80.0\% & Oracle & 81.4\% & [80.2, 82.5]\% & 1.164 & 175.0 \\
1 & 80.0\% & Estimated & 81.4\% & [80.3, 82.5]\% & 1.164 & 175.1 \\
1 & 80.0\% & Estimated + clip(5) & 79.5\% & [78.5, 80.5]\% & 1.133 & 191.0 \\
\addlinespace[2pt]
\midrule
1.5 & 95.0\% & Unweighted & 70.9\% & [69.9, 71.8]\% & 1.077 & 500.0 \\
1.5 & 95.0\% & Oracle & 96.1\% & [95.2, 96.9]\% & 1.329 & 72.8 \\
1.5 & 95.0\% & Estimated & 96.1\% & [95.2, 96.9]\% & 1.329 & 73.2 \\
1.5 & 95.0\% & Estimated + clip(5) & 92.5\% & [91.3, 93.7]\% & 1.295 & 124.6 \\
\addlinespace[2pt]
1.5 & 90.0\% & Unweighted & 55.0\% & [54.2, 55.8]\% & 0.944 & 500.0 \\
1.5 & 90.0\% & Oracle & 91.0\% & [89.7, 92.4]\% & 1.281 & 72.8 \\
1.5 & 90.0\% & Estimated & 90.9\% & [89.5, 92.3]\% & 1.281 & 73.2 \\
1.5 & 90.0\% & Estimated + clip(5) & 83.8\% & [82.3, 85.3]\% & 1.234 & 124.6 \\
\addlinespace[2pt]
1.5 & 80.0\% & Unweighted & 31.6\% & [30.8, 32.5]\% & 0.846 & 500.0 \\
1.5 & 80.0\% & Oracle & 80.0\% & [78.0, 82.0]\% & 1.185 & 72.8 \\
1.5 & 80.0\% & Estimated & 79.8\% & [77.8, 81.7]\% & 1.183 & 73.2 \\
1.5 & 80.0\% & Estimated + clip(5) & 71.5\% & [70.2, 72.8]\% & 1.085 & 124.6 \\
\addlinespace[2pt]
\midrule
\bottomrule
\end{longtable}

%% file: references.bib
@article{ha2018worldmodels,
  author       = {Ha, David and Schmidhuber, J{\"u}rgen},
  title        = {World Models},
  journal      = {arXiv preprint arXiv:1803.10122},
  year         = {2018},
  eprint       = {1803.10122},
  archiveprefix= {arXiv},
  primaryclass = {cs.LG},
  url          = {https://arxiv.org/abs/1803.10122}
}

@inproceedings{hafner2019planet,
  author    = {Hafner, Danijar and Lillicrap, Timothy and Fischer, Ian and Villegas, Ruben and Ha, David and Lee, Honglak and Davidson, James},
  title     = {Learning Latent Dynamics for Planning from Pixels},
  booktitle = {Proceedings of the 36th International Conference on Machine Learning},
  series    = {Proceedings of Machine Learning Research},
  volume    = {97},
  pages     = {2555--2565},
  year      = {2019},
  publisher = {PMLR},
  url       = {https://proceedings.mlr.press/v97/hafner19a.html}
}

@inproceedings{hafner2020dreamer,
  author    = {Hafner, Danijar and Lillicrap, Timothy and Ba, Jimmy and Norouzi, Mohammad},
  title     = {Dream to Control: Learning Behaviors by Latent Imagination},
  booktitle = {International Conference on Learning Representations},
  year      = {2020},
  url       = {https://arxiv.org/abs/1912.01603}
}

@article{hafner2025dreamerv3,
  author  = {Hafner, Danijar and Pasukonis, Jurgis and Ba, Jimmy and Lillicrap, Timothy},
  title   = {Mastering Diverse Control Tasks Through World Models},
  journal = {Nature},
  volume  = {640},
  pages   = {647--653},
  year    = {2025},
  doi     = {10.1038/s41586-025-08744-2},
  url     = {https://www.nature.com/articles/s41586-025-08744-2}
}

@inproceedings{chua2018pets,
  author    = {Chua, Kurtland and Calandra, Roberto and McAllister, Rowan and Levine, Sergey},
  title     = {Deep Reinforcement Learning in a Handful of Trials Using Probabilistic Dynamics Models},
  booktitle = {Advances in Neural Information Processing Systems},
  volume    = {31},
  year      = {2018},
  url       = {https://proceedings.neurips.cc/paper_files/paper/2018/hash/3de568f8597b94bda53149c7d7f5958c-Abstract.html}
}

@inproceedings{micheli2023iris,
  author    = {Micheli, Vincent and Alonso, Eloi and Fleuret, Fran{\c{c}}ois},
  title     = {Transformers are Sample-Efficient World Models},
  booktitle = {International Conference on Learning Representations},
  year      = {2023},
  url       = {https://iclr.cc/virtual/2023/poster/11839}
}

@inproceedings{hansen2024tdmpc2,
  author    = {Hansen, Nick and Su, Hao and Wang, Xiaolong},
  title     = {{TD-MPC2}: Scalable, Robust World Models for Continuous Control},
  booktitle = {International Conference on Learning Representations},
  year      = {2024},
  url       = {https://proceedings.iclr.cc/paper_files/paper/2024/hash/cf73d57b6dcda32b293df7c2d5341f49-Abstract-Conference.html}
}

@inproceedings{wu2023daydreamer,
  author    = {Wu, Philipp and Escontrela, Alejandro and Hafner, Danijar and Abbeel, Pieter and Goldberg, Ken},
  title     = {DayDreamer: World Models for Physical Robot Learning},
  booktitle = {Proceedings of The 6th Conference on Robot Learning},
  series    = {Proceedings of Machine Learning Research},
  volume    = {205},
  pages     = {2226--2240},
  year      = {2023},
  publisher = {PMLR},
  url       = {https://proceedings.mlr.press/v205/wu23c.html}
}

@inproceedings{zhou2025dinowm,
  author    = {Zhou, Gaoyue and Pan, Hengkai and {LeCun}, Yann and Pinto, Lerrel},
  title     = {{DINO-WM}: World Models on Pre-trained Visual Features Enable Zero-shot Planning},
  booktitle = {Proceedings of the 42nd International Conference on Machine Learning},
  series    = {Proceedings of Machine Learning Research},
  volume    = {267},
  pages     = {79115--79135},
  year      = {2025},
  publisher = {PMLR},
  url       = {https://proceedings.mlr.press/v267/zhou25t.html}
}

@article{assran2025vjepa2,
  author       = {Assran, Mido and Bardes, Adrien and Fan, David and Garrido, Quentin and Howes, Russell and others},
  title        = {{V-JEPA 2}: Self-Supervised Video Models Enable Understanding, Prediction and Planning},
  journal      = {arXiv preprint arXiv:2506.09985},
  year         = {2025},
  eprint       = {2506.09985},
  archiveprefix= {arXiv},
  url          = {https://arxiv.org/abs/2506.09985}
}

@article{terver2025jepawms,
  author       = {Terver, Basile and Yang, Tsung-Yen and Ponce, Jean and Bardes, Adrien and {LeCun}, Yann},
  title        = {What Drives Success in Physical Planning with Joint-Embedding Predictive World Models?},
  journal      = {arXiv preprint arXiv:2512.24497},
  year         = {2025},
  eprint       = {2512.24497},
  archiveprefix= {arXiv},
  url          = {https://arxiv.org/abs/2512.24497}
}

@article{quevedo2025worldgym,
  author       = {Quevedo, Julian and Sharma, Ansh Kumar and Sun, Yixiang and Suryavanshi, Varad and Liang, Percy and Yang, Sherry},
  title        = {WorldGym: World Model as an Environment for Policy Evaluation},
  journal      = {arXiv preprint arXiv:2506.00613},
  year         = {2025},
  eprint       = {2506.00613},
  archiveprefix= {arXiv},
  url          = {https://arxiv.org/abs/2506.00613}
}

@article{khazatsky2024droid,
  author       = {Khazatsky, Alexander and Pertsch, Karl and Nair, Suraj and Balakrishna, Ashwin and Dasari, Sudeep and others},
  title        = {{DROID}: A Large-Scale In-The-Wild Robot Manipulation Dataset},
  journal      = {arXiv preprint arXiv:2403.12945},
  year         = {2024},
  eprint       = {2403.12945},
  archiveprefix= {arXiv},
  url          = {https://arxiv.org/abs/2403.12945}
}

@article{oquab2023dinov2,
  author       = {Oquab, Maxime and Darcet, Timoth{\'e}e and Moutakanni, Th{\'e}o and Vo, Huy and Szafraniec, Marc and others},
  title        = {{DINOv2}: Learning Robust Visual Features without Supervision},
  journal      = {arXiv preprint arXiv:2304.07193},
  year         = {2023},
  eprint       = {2304.07193},
  archiveprefix= {arXiv},
  url          = {https://arxiv.org/abs/2304.07193}
}

@inproceedings{richens2025worldmodels,
  author    = {Richens, Jonathan and Everitt, Tom and Abel, David},
  title     = {General Agents Need World Models},
  booktitle = {Proceedings of the 42nd International Conference on Machine Learning},
  series    = {Proceedings of Machine Learning Research},
  volume    = {267},
  pages     = {51659--51687},
  year      = {2025},
  publisher = {PMLR},
  url       = {https://proceedings.mlr.press/v267/richens25a.html}
}

@inproceedings{jang2025dreamgen,
  author    = {Jang, Joel and Ye, Seonghyeon and Lin, Zongyu and Xiang, Jiannan and Bjorck, Johan and others},
  title     = {DreamGen: Unlocking Generalization in Robot Learning through Video World Models},
  booktitle = {Proceedings of The 9th Conference on Robot Learning},
  series    = {Proceedings of Machine Learning Research},
  volume    = {305},
  pages     = {5170--5194},
  year      = {2025},
  publisher = {PMLR},
  url       = {https://proceedings.mlr.press/v305/jang25a.html}
}

@inproceedings{chandra2025diwa,
  author    = {Chandra, Akshay L. and Nematollahi, Iman and Huang, Chenguang and Welschehold, Tim and Burgard, Wolfram and Valada, Abhinav},
  title     = {{DiWA}: Diffusion Policy Adaptation with World Models},
  booktitle = {Proceedings of The 9th Conference on Robot Learning},
  series    = {Proceedings of Machine Learning Research},
  volume    = {305},
  pages     = {3378--3400},
  year      = {2025},
  publisher = {PMLR},
  url       = {https://proceedings.mlr.press/v305/chandra25a.html}
}

@inproceedings{seo2025latentsafety,
  author    = {Seo, Junwon and Nakamura, Kensuke and Bajcsy, Andrea},
  title     = {Uncertainty-aware Latent Safety Filters for Avoiding Out-of-Distribution Failures},
  booktitle = {Proceedings of The 9th Conference on Robot Learning},
  series    = {Proceedings of Machine Learning Research},
  volume    = {305},
  pages     = {4442--4472},
  year      = {2025},
  publisher = {PMLR},
  url       = {https://proceedings.mlr.press/v305/seo25a.html}
}

@book{vovk2005algorithmic,
  author    = {Vovk, Vladimir and Gammerman, Alexander and Shafer, Glenn},
  title     = {Algorithmic Learning in a Random World},
  publisher = {Springer},
  address   = {New York},
  year      = {2005},
  doi       = {10.1007/b106715},
  url       = {https://link.springer.com/book/10.1007/b106715}
}

@article{lei2018distributionfree,
  author  = {Lei, Jing and G'Sell, Max and Rinaldo, Alessandro and Tibshirani, Ryan J. and Wasserman, Larry},
  title   = {Distribution-Free Predictive Inference for Regression},
  journal = {Journal of the American Statistical Association},
  volume  = {113},
  number  = {523},
  pages   = {1094--1111},
  year    = {2018},
  doi     = {10.1080/01621459.2017.1307116},
  url     = {https://arxiv.org/abs/1604.04173}
}

@inproceedings{tibshirani2019covariateshift,
  author    = {Tibshirani, Ryan J. and Barber, Rina Foygel and Cand{\`e}s, Emmanuel and Ramdas, Aaditya},
  title     = {Conformal Prediction Under Covariate Shift},
  booktitle = {Advances in Neural Information Processing Systems},
  volume    = {32},
  year      = {2019},
  url       = {https://papers.neurips.cc/paper_files/paper/2019/hash/8fb21ee7a2207526da55a679f0332de2-Abstract.html}
}

@article{barber2023beyond,
  author  = {Barber, Rina Foygel and Cand{\`e}s, Emmanuel J. and Ramdas, Aaditya and Tibshirani, Ryan J.},
  title   = {Conformal Prediction Beyond Exchangeability},
  journal = {The Annals of Statistics},
  volume  = {51},
  number  = {2},
  pages   = {816--845},
  year    = {2023},
  doi     = {10.1214/23-AOS2276},
  url     = {https://arxiv.org/abs/2202.13415}
}

@article{lei2021counterfactual,
  author  = {Lei, Lihua and Cand{\`e}s, Emmanuel J.},
  title   = {Conformal Inference of Counterfactuals and Individual Treatment Effects},
  journal = {Journal of the Royal Statistical Society: Series B},
  volume  = {83},
  number  = {5},
  pages   = {911--938},
  year    = {2021},
  doi     = {10.1111/rssb.12445},
  url     = {https://academic.oup.com/jrsssb/article/83/5/911/7056131}
}

@inproceedings{gibbs2021adaptive,
  author    = {Gibbs, Isaac and Cand{\`e}s, Emmanuel},
  title     = {Adaptive Conformal Inference Under Distribution Shift},
  booktitle = {Advances in Neural Information Processing Systems},
  volume    = {34},
  year      = {2021},
  url       = {https://proceedings.neurips.cc/paper/2021/hash/0d441de75945e5acbc865406fc9a2559-Abstract.html}
}

@inproceedings{chee2024controllers,
  author    = {Chee, Kong Yao and Silva, Thales C. and Hsieh, M. Ani and Pappas, George J.},
  title     = {Uncertainty Quantification and Robustification of Model-Based Controllers Using Conformal Prediction},
  booktitle = {Proceedings of the 6th Annual Learning for Dynamics \& Control Conference},
  series    = {Proceedings of Machine Learning Research},
  volume    = {242},
  pages     = {528--540},
  year      = {2024},
  publisher = {PMLR},
  url       = {https://proceedings.mlr.press/v242/chee24a.html}
}

@inproceedings{zhou2024trajectories,
  author    = {Zhou, Yanfei and Lindemann, Lars and Sesia, Matteo},
  title     = {Conformalized Adaptive Forecasting of Heterogeneous Trajectories},
  booktitle = {Proceedings of the 41st International Conference on Machine Learning},
  series    = {Proceedings of Machine Learning Research},
  volume    = {235},
  pages     = {62002--62056},
  year      = {2024},
  publisher = {PMLR},
  url       = {https://proceedings.mlr.press/v235/zhou24l.html}
}

@article{cauchois2024robustvalidation,
  author  = {Cauchois, Maxime and Gupta, Suyash and Ali, Alnur and Duchi, John C.},
  title   = {Robust Validation: Confident Predictions Even When Distributions Shift},
  journal = {Journal of the American Statistical Association},
  year    = {2024},
  url     = {https://arxiv.org/abs/2008.04267}
}

@article{bickel2009covariateshift,
  author  = {Bickel, Steffen and Br{\"u}ckner, Michael and Scheffer, Tobias},
  title   = {Discriminative Learning Under Covariate Shift},
  journal = {Journal of Machine Learning Research},
  volume  = {10},
  pages   = {2137--2155},
  year    = {2009},
  url     = {https://jmlr.org/papers/v10/bickel09a.html}
}

@inproceedings{sugiyama2007direct,
  author    = {Sugiyama, Masashi and Nakajima, Shinichi and Kashima, Hisashi and Buenau, Paul V. and Kawanabe, Motoaki},
  title     = {Direct Importance Estimation with Model Selection and Its Application to Covariate Shift Adaptation},
  booktitle = {Advances in Neural Information Processing Systems},
  volume    = {20},
  year      = {2007},
  url       = {https://proceedings.neurips.cc/paper_files/paper/2007/hash/be83ab3ecd0db773eb2dc1b0a17836a1-Abstract.html}
}

@inproceedings{menon2016linking,
  author    = {Menon, Aditya and Ong, Cheng Soon},
  title     = {Linking Losses for Density Ratio and Class-Probability Estimation},
  booktitle = {Proceedings of The 33rd International Conference on Machine Learning},
  series    = {Proceedings of Machine Learning Research},
  volume    = {48},
  pages     = {304--313},
  year      = {2016},
  publisher = {PMLR},
  url       = {https://proceedings.mlr.press/v48/menon16.html}
}

@inproceedings{nachum2019dualdice,
  author    = {Nachum, Ofir and Chow, Yinlam and Dai, Bo and Li, Lihong},
  title     = {{DualDICE}: Behavior-Agnostic Estimation of Discounted Stationary Distribution Corrections},
  booktitle = {Advances in Neural Information Processing Systems},
  volume    = {32},
  year      = {2019},
  url       = {https://proceedings.neurips.cc/paper/2019/hash/cf9a242b70f45317ffd281241fa66502-Abstract.html}
}

@inproceedings{liu2018horizon,
  author    = {Liu, Qiang and Li, Lihong and Tang, Ziyang and Zhou, Dengyong},
  title     = {Breaking the Curse of Horizon: Infinite-Horizon Off-Policy Estimation},
  booktitle = {Advances in Neural Information Processing Systems},
  volume    = {31},
  year      = {2018},
  url       = {https://proceedings.neurips.cc/paper_files/paper/2018/hash/dda04f9d634145a9c68d5dfe53b21272-Abstract.html}
}

@inproceedings{jiang2016doubly,
  author    = {Jiang, Nan and Li, Lihong},
  title     = {Doubly Robust Off-policy Value Evaluation for Reinforcement Learning},
  booktitle = {Proceedings of The 33rd International Conference on Machine Learning},
  series    = {Proceedings of Machine Learning Research},
  volume    = {48},
  pages     = {652--661},
  year      = {2016},
  publisher = {PMLR},
  url       = {https://proceedings.mlr.press/v48/jiang16.html}
}

@book{pearl2009causality,
  author    = {Pearl, Judea},
  title     = {Causality: Models, Reasoning, and Inference},
  edition   = {2},
  publisher = {Cambridge University Press},
  year      = {2009},
  doi       = {10.1017/CBO9780511803161},
  url       = {https://www.cambridge.org/core/books/causality/B0046844FAE10CBF274D4ACBDAEB5F5B}
}

@book{villani2009optimal,
  author    = {Villani, C{\'e}dric},
  title     = {Optimal Transport: Old and New},
  series    = {Grundlehren der mathematischen Wissenschaften},
  volume    = {338},
  publisher = {Springer},
  year      = {2009},
  doi       = {10.1007/978-3-540-71050-9},
  url       = {https://link.springer.com/book/10.1007/978-3-540-71050-9}
}

@inproceedings{demoura2021lean4,
  author    = {de Moura, Leonardo and Ullrich, Sebastian},
  title     = {The Lean 4 Theorem Prover and Programming Language},
  booktitle = {Automated Deduction -- CADE 28},
  series    = {Lecture Notes in Computer Science},
  volume    = {12699},
  pages     = {625--635},
  publisher = {Springer},
  year      = {2021},
  doi       = {10.1007/978-3-030-79876-5_37},
  url       = {https://link.springer.com/chapter/10.1007/978-3-030-79876-5_37}
}

@inproceedings{mathlib2020,
  author    = {{The mathlib Community}},
  title     = {The Lean Mathematical Library},
  booktitle = {Proceedings of the 9th ACM SIGPLAN International Conference on Certified Programs and Proofs},
  pages     = {367--381},
  year      = {2020},
  doi       = {10.1145/3372885.3373824},
  url       = {https://arxiv.org/abs/1910.09336}
}

@software{jepawms_repo,
  author  = {{FAIR}},
  title   = {{JEPA-WMs}: Code, Data and Weights for Physical Planning with Joint-Embedding Predictive World Models},
  year    = {2026},
  url     = {https://github.com/facebookresearch/jepa-wms},
  note    = {Git commit 13cf1d9c\allowbreak 7e476f53\allowbreak c17714d2\allowbreak e0f1dc23\allowbreak 9a883ce0}
}

@software{dinov2_repo,
  author  = {{FAIR}},
  title   = {{DINOv2}: PyTorch Code and Models},
  year    = {2026},
  url     = {https://github.com/facebookresearch/dinov2},
  note    = {Git commit 7764ea0f\allowbreak 912e53c9\allowbreak 2e82eb78\allowbreak a2a1631e\allowbreak 92725fc8}
}
